\documentclass[journal,twoside,web]{ieeecolor}
\usepackage{generic}
\usepackage{cite}
\usepackage{amsmath,amssymb,amsfonts}
\usepackage{algorithm}
\usepackage{algpseudocode}
\usepackage{graphicx}
\usepackage{textcomp}

\usepackage{algorithm}
\usepackage{url}
\newtheorem{theorem}{Theorem}

\newtheorem{lemma}{Lemma}
\newtheorem{remark}{Remark}
\newtheorem{definition}{Definition}
\newtheorem{assumption}{Assumption}

\definecolor{gray}{rgb}{0.5,0.5,0.5}

\def\BibTeX{{\rm B\kern-.05em{\sc i\kern-.025em b}\kern-.08em
		T\kern-.1667em\lower.7ex\hbox{E}\kern-.125emX}}
\begin{document}
\title{Analysis of Adam Algorithms for Stochastic Dynamic Systems}
\author{Xin Zheng, Yifei Jin, and Lei Guo, \IEEEmembership{Fellow, IEEE}
    \thanks{This paper was supported by the National Key Research and Development Program under Grant No. 2024YFC3307200 and the National Natural Science Foundation of China under Grant No. 12288201.}
    \thanks{Xin Zheng and Lei Guo are with the State Key Laboratory of Mathematical Sciences, Academy of Mathematics and Systems Science, Chinese Academy of Sciences, Beijing 100190, China, and also with the School of Mathematical Sciences, University of Chinese Academy of Sciences, Beijing 100049, China. (e-mails: zhengxin2021@amss.ac.cn, lguo@amss.ac.cn).}
    \thanks{Yifei Jin is with the School of Advanced Interdisciplinary Sciences, University of Chinese Academy of Sciences, Beijing 101408, China, and also with the State Key Laboratory of Mathematical Sciences, Academy of Mathematics and Systems Science, Chinese Academy of Sciences, Beijing 100190, China. (e-mail: jinyifei@amss.ac.cn).}}

\maketitle

\begin{abstract}
The adaptive moment estimation algorithm, known as Adam, is widely used in modern machine learning, owing to its low per-iteration complexity and strong empirical performance. Despite its prevalent use, the theoretical foundation of Adam remains largely unexplored for time-varying and nonstationary systems. In fact, the existing theoretical analyses of Adam-type algorithms are primarily concerned with time-invariant model parameters and explicitly or implicitly rely on independent and identically distributed (i.i.d.) data assumptions, under which the learning task can be formulated as minimizing a fixed expected objective with a static minimizer.
However, such assumptions are often violated in time-varying and nonstationary systems, thereby calling for a theoretical investigation beyond the conventional yet idealized i.i.d. setting. The main objective of this paper is to solve this challenging problem by establishing a general theory of Adam for time-varying and nonstationary stochastic systems. We will introduce some new techniques for analyzing the products of nonstationary and dependent random matrices induced by Adam’s coupled first- and second-moment recursions, and will construct a new stochastic Lyapunov function that blends these two moment dynamics. Under a stochastic excitation condition that allows nonstationary and dependent data, we will derive both parameter tracking and output prediction error bounds explicitly, quantifying the effects of stepsize, first- and second-momentum parameters, gradient noise and parameter drift. These bounds not only provide guarantees for Adam performance, but also provide guidelines for hyperparameter selection. Experiments on both synthetic and real-world data validate our theory and design guidelines.
\end{abstract}

\begin{IEEEkeywords}
Stochastic systems, Adam, adaptive filtering,  performance bound, stochastic excitation, products of random matrices, stochastic Lyapunov function.
\end{IEEEkeywords}
\section{Introduction}\label{sec:introduction}
The adaptive moment estimation algorithm, known as Adam~\cite{kingma2015adam}, is one of the most widely used and well-known optimization methods in modern machine learning, owing to its adaptive per-coordinate stepsizes via first- and second-moment estimates, which yield stable and robust updates in large-scale stochastic optimization.
It has found broad applications in artificial intelligence, including generative adversarial networks (GANs)~\cite{Brock2019BigGAN} and generative pre-trained transformers (GPTs)~\cite{Brown2020GPT3}, vision transformers (ViTs)~\cite{Dosovitskiy2021ViT}, as well as reinforcement learning–based feedback control systems, e.g., soft actor-critic (SAC)~\cite{Haarnoja2018SAC}.

Despite its remarkable empirical success, the theoretical understanding of Adam remains limited~\cite{Barakat2021Dynamical,Li2023Adam, Jin2025AdamFramework}, although substantial theoretical efforts have been devoted to analyzing the  behavior of Adam (e.g., \cite{reddi2018convergence,Zou2019AdamRMSProp,Zhang2022AdamConverge,Li2023Adam,Hong2024AdamRelaxed,Xiao2024AdamFamily,Wang2024AdamNonuniform,Jin2025AdamFramework}).
For example, Reddi et al.~\cite{reddi2018convergence} revealed potential divergence issues of Adam and proposed the stabilized variant AMSGrad to address these issues. In contrast,
Zhang et al.~\cite{Zhang2022AdamConverge} showed that, under appropriate conditions, the original Adam can still converge without any modification to its update rules. More recent studies have relaxed smoothness or regularity assumptions and established refined convergence guarantees for Adam under broader settings, including non-uniform smoothness conditions~\cite{Li2023Adam,Wang2024AdamNonuniform}, generalized affine-variance noise assumptions~\cite{Hong2024AdamRelaxed}, nonsmooth optimization~\cite{Xiao2024AdamFamily}, and unified convergence frameworks~\cite{Jin2025AdamFramework}.

However, most existing analyses of Adam-type algorithms are tailored to a static learning regime: they typically assume time-invariant model parameters and i.i.d. or stationary ergodic data. Under these assumptions, the learning problem reduces to minimizing a fixed objective under mathematical expectation with a static minimizer. In many applications of interest, however, Adam is deployed in time-varying and nonstationary dynamical systems~\cite{Dohare2024Plasticity, Guo2022OCM}, where the expected objective and its minimizer may drift over time. This mismatch renders static guarantees inadequate and motivates theoretical frameworks that rigorously characterize tracking and prediction performance. However, the corresponding theoretical foundations remain largely underdeveloped.

To address this challenge, we draw on the well-developed theory of adaptive identification under time-varying and nonstationary scenarios, which provides powerful analytical tools for the performance analysis of Adam in such settings. For example, Guo~\cite{guo1990estimating} pioneered the conditional excitation (CE) condition, which can be viewed as a stochastic relaxation of the deterministic persistent excitation (PE) requirement and is well suited to feedback-driven stochastic control systems. Under this condition, stability and convergence of Kalman filter–based parameter tracking were established for time-varying stochastic regression models with nonstationary and correlated data. 
Subsequently, a more general CE condition was proposed by Guo \cite{guo1994stability}, leading to a unified theory for stability and performance analysis of stochastic regression models with time-varying parameters and nonstationary data streams, covering several fundamental adaptive tracking algorithms such as the Kalman filter, least mean squares (LMS), and forgetting factor recursive least squares (FFRLS) \cite{guo1994stability}. Moreover, inspired by the ideas in the analysis of recursive stochastic algorithms with vanishing gains \cite{Ljungrecursive}, more accurate performance analysis was conducted in \cite{guo1995exponential, guo1995performance, guo1997lms} for recursive tracking algorithms where the adaptation gains do not vanish.It is worth noting that the Kalman filter and FFRLS employ matrix-valued adaptation gains, reflecting their Newton-type (second-order) nature. This typically incurs a heavy computational burden in high-dimensional parameter settings, which partly explains the widespread use of gradient-type methods, where the adaptation gain is usually scalar.

Among the gradient-type algorithms, the well-known LMS is probably the simplest one. However, \cite{POLYAK19641} showed that incorporating a heavy-ball momentum term can accelerate the convergence of gradient-type algorithms. In particular, applying this idea to LMS leads to the momentum LMS (MLMS) algorithm. \cite{Jin2026MomentumLMS} studied MLMS algorithms for time-varying stochastic linear systems under the CE condition introduced in \cite{guo1990estimating}, establishing tracking and regret bounds under general nonstationary streaming data settings. Unlike most existing LMS analyses, their approach augments the state vector to incorporate the momentum term, leading to a higher-dimensional state-space representation. This reformulation enables a more tractable analysis of the resulting stochastic matrix products and offers useful insights for the study of the more popular Adam algorithms in this paper. 

The theoretical study of the Adam is technically more challenging, because Adam couples adaptive recursions for the first- and second-moment estimates and applies a coordinate-wise, time-varying rescaling, resulting in a data-dependent, nonlinear update whose components evolve at different rates and are intricately coupled. As a result, most existing analytical techniques developed for static or linear stochastic regimes are not directly applicable for establishing rigorous guarantees for time-varying parameter with nonstationary dataset. To overcome these technical difficulties and lay a theoretical foundation of Adam  for stochastic dynamical systems, are the main objectives of this paper. Our main contributions are summarized as follows.
\begin{enumerate}
\item \emph{First,}
we establish explicit parameter tracking guarantees for Adam over a broad class of nonlinear stochastic systems with time-varying parameters and nonstationary data.
In contrast to most existing Adam analyses for static objectives that do not provide parameter tracking guarantees,
our results yield tracking-type performance characterizations under a general CE condition. Our results are established by introducing new methods in the analysis of the product of nonstationary and dependent random matrices with augmented dimension. 

\item \emph{Second,}
we derive explicit output prediction error bounds of Adam for time-varying systems with nonstationary dataset. In contrast to the existing prediction-type analyses developed for time-invariant parameter settings under i.i.d.\ data, our results \emph{do not} require any excitation condition on the system data. Our results are established by introducing a new stochastic Lyapunov function constructed based on a mixture of the first- and second- moment in Adam.

\item \emph{Third,}
both the parameter tracking and output prediction bounds are established for the first time, which quantify the effects of the hyperparameters,
gradient noise, and drift magnitude, thereby providing theoretical guidelines for
hyperparameter selection. 
\end{enumerate}

The remainder of this paper is organized as follows. Section~\ref{mainresults} will present the problem formulation and the Adam algorithm. Section~\ref{main thm} will establish the main results on parameter tracking and output prediction performance, followed by their proofs in Section~\ref{proofsmain}. Section~\ref{simulation} will validate the theoretical findings via simulations and experiments on both synthetic and real-world datasets, and the results support the proposed guidelines for hyperparameter selection. Finally, Section~\ref{conclusion} will conclude the paper with some remarks.

\noindent\textbf{Notation.}
For a matrix $A$, $\lambda_{\min}(A)$ and $\lambda_{\max}(A)$ denote its minimum and maximum eigenvalues, respectively. 
For symmetric matrices $A$ and $B$, $A\succeq B$ (resp., $A\succ B$) means $A-B$ is positive semidefinite (resp., positive definite), and $A\preceq B$ (resp., $A\prec B$) means $B\succeq A$ (resp., $B\succ A$). 
The transpose of $A$ is denoted by $A^{\mathrm T}$, and $\|A\|$ denotes the induced Euclidean-norm $\|A\|=\sqrt{\lambda_{\max}(A^{\mathrm T}A)}$. 
The mathematical expectation operator is denoted by $\mathbb E[\cdot]$, and $\mathbb E[\cdot\mid\mathcal F_k]$ denotes the conditional expectation with respect to $\mathcal F_k$, where $\{\mathcal F_k\}_{k\ge0}$ is a nondecreasing sequence of $\sigma$-algebras. 
A stochastic sequence $\{x_k,\mathcal F_k\}$ is adapted if $x_k$ is $\mathcal F_k$-measurable for all $k\ge0$. 
For real sequences $\{a_k\}$ and $\{b_k\}$ with $b_k>0$, $a_k=o(b_k)$ means $a_k/b_k\to0$, and $a_k=O(b_k)$ means $|a_k|\le Mb_k$ for some constant $M>0$. 
Finally, $\mathbf 0$ and $I$ denote the zero and identity matrices, respectively.
\section{Problem formulation and algorithm}\label{mainresults}
\subsection{Problem formulation}
We consider the time-varying  stochastic nonlinear model
\begin{equation}\label{nonlinear model}
y_{k+1}=G_k(\phi_k,\theta_k,\varepsilon_{k+1}),~~ k\ge 0,
\end{equation}
where $y_{k+1}\in\mathbb{R}$ is the system observation, $\phi_k\in\mathbb{R}^{d_1}$ ($d_1\ge1$) is the stochastic regressor, $\theta_k\in\mathbb{R}^{d_2}$ ($d_2\ge1$) is the
unknown time-varying parameter to be estimated, and $\varepsilon_{k+1}$ denotes the
random noise. The mapping $G_k:\mathbb{R}^{d_1}\times\mathbb{R}^{d_2}\times\mathbb{R}
\to\mathbb{R}$ is known. Furthermore, the parameter variation process is characterized by the sequence
$\{\Delta_k\}_{k\ge 1}$, where
\begin{equation}\label{difference parameter}
\Delta_k \triangleq \theta_k-\theta_{k-1},~~ k\ge 1.
\end{equation}

Given the data stream generated by \eqref{nonlinear model}, at each time $k$ we form a
one-step-ahead predictor $\hat y_{k+1}$ based on  $\mathcal{F}_k'$, where $\mathcal{F}_k^{'}$   is the $\sigma$-algebra generated by the available information up to time  $k$, i.e., $\mathcal{F}_k' = \sigma\{y_i,\phi_i, i \leq k\}$.
After observing $y_{k+1}$, the prediction error is quantified by the instantaneous loss
\begin{equation}\label{eq:loss}
\mathcal{L}\!\left(y_{k+1},\hat y_{k+1}\right),
\end{equation}
where $\mathcal{L}:\mathbb{R}\times\mathbb{R}\to\mathbb{R}^+$ is the given loss function
that measures the discrepancy between the observation $y_{k+1}$ and the prediction
$\hat y_{k+1}$. Without loss of generality, we assume that $\hat y_{k+1}$ can be described by the following parametric representation
\begin{equation}\label{eq:oracle_pred}
\hat y_{k+1}(\vartheta)=f_k(\phi_k,\vartheta), ~~ k\ge 0,
\end{equation}
where the mapping $f_k(\cdot,\cdot)$ is known, and $\vartheta\in\mathbb{R}^{d_2}$.

Then an $\mathcal{F}_k'$-measurable one-step-ahead oracle predictor $\hat y_{k+1}^{\ast}$ can be defined as 
$\hat y_{k+1}^{\ast} =f_k(\phi_k, \hat{\theta}_k^{\ast}),
$
where 
$
\hat \theta_{k}^{\ast}\in
\operatorname*{arg\,min}\limits_{\vartheta \in \mathbb{R}^{d_2}}
\mathbb{E}\!\left[\mathcal{L}\!\left(y_{k+1},\hat y_{k+1}(\vartheta)\right)\mid \mathcal{F}_k'\right].
$

Our objective is to track the time-varying parameter $\theta_k$ and to
minimize the averaged prediction error under the loss
$\mathcal{L}(\cdot,\cdot)$.
To this end, we first introduce the commonly used Adam algorithm in the next subsection.

\subsection{The Adam algorithm}
To facilitate the presentation of the algorithm, we need several definitions.
For any $k\ge 0$ and any $\vartheta\in\mathbb{R}^{d_2}$,
define the Clarke subdifferential selections by
\begin{equation}\label{eq:subgradient_def}
\begin{aligned}
\ell_{k+1}(\vartheta) &\in
\partial_{\hat y}\mathcal{L}\!\left(y_{k+1}, f_k(\phi_k,\vartheta)\right),\\
g_k(\vartheta) &\in
\partial_{\vartheta} f_k(\phi_k,\vartheta)\subset\mathbb{R}^{d_2},
\end{aligned}
\end{equation}
where $\partial$ denotes the Clarke subdifferential~\cite{Clarke}. Evaluating the above quantities at the current estimate $\hat\theta_k$ generated based on the observations, we define
\begin{equation}\label{eq:alg_selections}
\ell_{k+1}\triangleq \ell_{k+1}(\hat\theta_k),
~~
g_k\triangleq g_k(\hat\theta_k).
\end{equation}
We also use the Frobenius norm $\|\cdot\|_Q$ induced by a positive definite matrix $Q$, defined by $\|x\|_Q^2 \triangleq x^{\mathrm T}Qx$
for $x\in\mathbb{R}^{d_2}$.
To keep the parameter estimates in a prescribed admissible set, we employ the projection operator $\Pi_Q(\cdot)$ below (see, e.g.,~\cite{zhang2022identification}).
\begin{definition}\label{prodef}
Given a positive definite matrix $Q$, define the projection mapping
$\Pi_Q:\mathbb{R}^{d_2}\to D$ by
\begin{equation}
	\Pi_Q(x_1)
	\triangleq
	\arg\min_{x_2\in D}\,
	\|x_1-x_2\|_Q,
	~~ \forall~x_1\in\mathbb{R}^{d_2},
\end{equation}
where $D\subset\mathbb{R}^{d_2}$ is a given compact convex set such that
$\theta_k\in D$ for all $k\ge0$ (see Assumption~\ref{ass:boundedness}).
\end{definition}

We are now in a position to present the Adam algorithm. In this algorithm, $\odot$ and $\oslash$ denote elementwise
multiplication and division, respectively, and the square root $\sqrt{\cdot}$ is taken
elementwise for vectors. Moreover, $\mathbf 1\in\mathbb{R}^{d_2}$ denotes
the all-ones vector, and $\operatorname{diag}(\cdot)$ maps a vector to a
diagonal matrix, which will be used in the subsequent analysis.
\begin{algorithm}
\caption{The Adam Algorithm}
\label{Algorithm}
\begin{algorithmic}[1]
\Require stepsize $\alpha>0$, hyperparameters $\beta_1,\beta_2\in[0,1)$, offset $c_0>0$, admissible set $D$
\State Initialize $\hat\theta_0\in D$, $m_{0}=0$, $v_{0}=0$
\For{$k=0,1,2,\ldots$}
    \State $m_{k+1} \gets \beta_1 m_{k} + (1-\beta_1)\,\ell_{k+1} g_k$
    \State $v_{k+1} \gets \beta_2 v_{k} + (1-\beta_2)\,\ell_{k+1}^2\,(g_k\odot g_k)$
    \State $s_{k+1} \gets \sqrt{v_{k+1}} + c_0\,\mathbf 1$
    \State $Q_{k+1} \triangleq \operatorname{diag}(s_{k+1})\ \ (\succ 0)$
    \State $\hat\theta_{k+1} \gets \Pi_{Q_{k+1}}\!\Bigl(\hat\theta_k - \alpha\,(m_{k+1} \oslash s_{k+1})\Bigr)$
\EndFor
\end{algorithmic}
\end{algorithm}

\begin{remark}
It is obvious that Adam performs its updates element-wise, which is one of its basic features. The variable $m_{k+1}$ is a moving average of the (possibly noisy) gradient $\ell_{k+1} g_k$.
It smooths the sample-to-sample variability induced by stochasticity and time-variation, while retaining sensitivity to persistent changes, and is usually viewed as an estimate of the first moment of the gradient. 
Similarly, $v_{k+1}$ is a moving average of the element-wise squared gradient $\ell_{k+1}^2 (g_k\odot g_k)$, which stabilizes the adaptive scaling by reducing the effect of possible transient spikes, and is usually viewed as an estimate of the second moment. Moreover, 
in practical implementations of Adam, one often uses the bias-corrected momentum
\[
\hat{m}_{k} \triangleq \frac{m_{k}}{1-\beta_1^{k}},~~
\hat{v}_{k} \triangleq \frac{v_{k}}{1-\beta_2^{k}}, 
\]
to replace $m_{k}$ and $v_{k}$
in Algorithm~\ref{Algorithm} for all $ k\geq 1$, as originally proposed in~\cite{kingma2015adam}.
For analytical convenience, the bias-correction factors are omitted here without loss of generality, since $\beta_1^{k}\to 0$ and $\beta_2^{k}\to 0$  exponentially fast, which do not have nontrivial influence on our main results. 
\end{remark}

With $v_k$ defined as in Algorithm~\ref{Algorithm}, we define
\begin{equation}\label{bar_Vk}
V_k \triangleq \operatorname{diag}(v_k),~~
\bar V_k\triangleq (V_k^{\frac{1}{2}}+c_0 I)^2 \succ \mathbf{0}.
\end{equation}
Then $Q_k$ can be rewritten as $\operatorname{diag}(s_k)=\bar V_k^{\frac12}$ and
$m_{k}\oslash s_{k}=\bar V_k^{-\frac{1}{2}}m_k$.
Hence, the update admits the matrix form
\begin{equation}\label{update rule}
\hat\theta_{k+1}
=\Pi_{\bar V_{k+1}^{\frac{1}{2}}}\!\Bigl(\hat\theta_k-\alpha\,\bar V_{k+1}^{-\frac{1}{2}}m_{k+1}\Bigr).
\end{equation}
All subsequent analysis will be based on \eqref{bar_Vk} and \eqref{update rule}.

\section{The main results}\label{main thm}
In this section, we derive bounds both on the parameter tracking error and the output prediction error of Algorithm~\ref{Algorithm}.

To facilitate the presentation of the results, let $\mathcal{F}_k$ denote the
$\sigma$-algebra generated by $\{y_i,\phi_i,\theta_i,\, i\le k\}$. For any
$k\ge 0$ and $\vartheta\in\mathbb{R}^{d_2}$, define
\begin{equation}\label{decomposition noise}
\psi_k(\vartheta)\triangleq \mathbb{E}\left[\ell_{k+1}(\vartheta)\mid \mathcal{F}_k\right],
~~
w_{k+1}(\vartheta)\triangleq \ell_{k+1}(\vartheta)-\psi_k(\vartheta).
\end{equation}
Then $\ell_{k+1}(\vartheta)=\psi_k(\vartheta)+w_{k+1}(\vartheta)$. Moreover, for any
$\vartheta\in\mathbb{R}^{d_2}$, whenever $\ell_{k+1}(\vartheta)$ is integrable,
$\{w_{k+1}(\vartheta),\mathcal{F}_k\}$ forms a martingale difference sequence, i.e.,
\[
\mathbb{E}\!\left[w_{k+1}(\vartheta)\mid \mathcal{F}_k\right]=0,~~ k\ge0.
\]

To establish tracking and prediction performance guarantees for Algorithm~\ref{Algorithm},
a standard approach is to study the squared parameter tracking error
$\|\tilde{\theta}_k\|^2$, where $\tilde{\theta}_k \triangleq \theta_k-\hat{\theta}_k$. However, such a direct analysis is impossible for Adam due to the coupling
between the first momentum $m_k$ and the second momentum $\bar V_k$ in the update of 
$\hat{\theta}_k$.
These interactions induce intrinsically coupled error dynamics, rendering the stability and tracking analysis highly nontrivial.

To address this challenge, we introduce the following augmented state vector transformed from the parameter estimation error and  the first momentum, scaled respectively by the fourth root and the negative fourth root of the second momentum:
\begin{equation}\label{Theta definition}
\tilde{\Theta}_k \triangleq
\begin{bmatrix}
\bar V_k^{\frac{1}{4}}\tilde{\theta}_k \\[3pt]
\beta_1 \bar V_k^{-\frac{1}{4}} m_k
\end{bmatrix}, ~~ k\ge 0,
\end{equation}
and construct a Lyapunov function $\|\tilde{\Theta}_k\|^2$.

\subsection{Parameter tracking}\label{parameter tracking}

To establish an upper bound on the parameter tracking error of Algorithm~\ref{Algorithm}, we need the following assumptions.

\begin{assumption}[Bounded regressors and parameters]\label{ass:boundedness}The random  regressor sequence $\{\phi_k,\mathcal{F}_k\}$ is adapted and uniformly bounded, i.e., there exists a constant $C>0$ such that
\begin{equation}
\|\phi_k\|\le C,~~ \forall k\ge0,~\text{a.s.}
\end{equation}
Moreover, the time-varying parameter satisfies $\theta_k\in D$ for all $k\ge0$,
where $D\subset\mathbb{R}^{d_2}$ is a given compact convex set, its upper bound is denoted by a constant $L$: 
\begin{equation}
\|\theta_k\|\le L,~~ \forall k\ge0.
\end{equation}
In addition, there exists a constant $C_\Delta\geq 0$ such that
\begin{equation}\label{delta_bound}
\mathbb{E} \|\Delta_{k+1}\|^2 \le C_\Delta^2,~~ \forall k\ge0,
\end{equation}
where $\Delta_{k+1}$ is the parameter variation defined in \eqref{difference parameter}.
\end{assumption}

\begin{assumption}[Gradient regularity and noise moments]\label{ass:grad-noise}
Let $\ell_{k+1}(\vartheta)$ and $g_k(\vartheta)$ be the Clarke selections in \eqref{eq:subgradient_def}.
The following conditions hold uniformly for all $k\ge0$ and all $\vartheta\in D$, a.s.:
\begin{enumerate}
\item[(i)] (\emph{Boundedness and noise moment})
There exist constants $M_\ell,M_g,C_w>0$ such that
\begin{equation}
|\ell_{k+1}(\vartheta)|\le M_\ell,~~\|g_k(\vartheta)\|\le M_g,
\end{equation}
and
\begin{equation}\label{w_bound}
\mathbb{E}|w_{k+1}(\vartheta)|^2\le C_w^2.
\end{equation}

\item[(ii)] (\emph{Sector condition})
There exists a constant $\mu>0$ such that
\begin{equation}\label{sector identification}
\psi_k(\vartheta)
g_k(\vartheta)^{\mathrm T}(\vartheta-\theta_k)
\ge
\mu\,
\bigl(g_k(\vartheta)^{\mathrm T}(\theta_k-\vartheta)\bigr)^2 .
\end{equation}
where $g_k(\vartheta)$ is defined in \eqref{eq:subgradient_def} and $\psi_k(\vartheta)$  is defined in \eqref{decomposition noise}.
\item[(iii)] (\emph{Regularity})
There exist constants $L_\psi,L_g>0$ such that
\begin{equation}
\begin{aligned}
|\psi_k(\vartheta)|\le& L_\psi\|\theta_k-\vartheta\|,\\
~~
\|g_k(\vartheta_1)-g_k(\vartheta_2)\|
\le& L_g\|\vartheta_1-\vartheta_2\|,
~ \forall\,\vartheta,\vartheta_1,\vartheta_2\in D .
\end{aligned}
\end{equation}
\end{enumerate}
\end{assumption}

\begin{assumption}[Conditional excitation]\label{ass:pe}
There exist $\gamma>0$ and $h\in\mathbb{N}$ with $h\ge1$ such that, for all $j\ge0$,
\begin{equation}\label{eq:pe}
\inf_{\vartheta\in D}
\lambda_{\min}\Bigl(
\mathbb{E}\Bigl[\sum_{i=jh+1}^{(j+1)h} g_i(\vartheta)g_i(\vartheta)^{\mathrm T}\Big|\mathcal{F}_{jh}\Bigr]
\Bigr)\ge \gamma, ~~\text{a.s.}
\end{equation}
\end{assumption}

\begin{remark}\label{remark of grad}
The conditions in Assumption~\ref{ass:grad-noise} are mild and accommodate a broad class of models with either linear or nonlinear parametrizations. 
They are compatible with both convex and nonconvex loss functions and can be verified for a variety of commonly used models. 
Representative examples include linear regression with absolute-value loss~\cite{bloomfield1983lad}, logistic regression with cross-entropy loss~\cite{goodfellow2016dl}, and saturated observation models with mean-square loss~\cite{zhang2023adaptive}.

The three conditions in Assumption \ref{ass:grad-noise} play distinct technical roles.
Condition~(i) imposes uniform boundedness of the Clarke selections and second-moment bounds on the gradient noise.
Condition~(ii) is a sector-type inequality ensuring a sufficient identifiability property along the direction induced by $g_k(\vartheta)$, it is more general than the strong convexity condition.
Condition~(iii) imposes Lipschitz-type regularity. 
The bound $|\psi_k(\vartheta)|\le L_\psi|\theta_k-\vartheta|$ indicates that $\psi_k(\vartheta)=\mathbb{E}[\ell_{k+1}(\vartheta)\mid\mathcal{F}_k]$ exhibits Lipschitz-type growth with respect to the parameter mismatch (e.g., for linear regression with squared loss and $\mathbb E[\varepsilon_{k+1}\mid\mathcal F_k]=0$, $\psi_k(\vartheta)=-\phi_k^{\mathrm{T}}(\theta_k-\vartheta)$).
The Lipschitz continuity of $g_k(\cdot)$ further ensures regularity of the generalized gradient mapping. These conditions are essential for establishing parameter tracking error bounds.
\end{remark}

\begin{remark}
Assumption \ref{ass:pe} can be viewed as a nonlinear extension of the
CE condition originally introduced by~\cite{guo1990estimating}.
It provides a general stochastic data condition under which parameter-tracking
guarantees can be established.
This is substantially weaker and more general than the deterministic or
structured stochastic assumptions (e.g., i.i.d.\ or stationary and ergodic data)
commonly adopted in the Adam literature
(see, e.g.,~\cite{kingma2015adam,reddi2018convergence}). To the best of our knowledge, such a CE condition has not been employed in existing
convergence analyses of Adam-type methods.

In the linear regression case, condition~\eqref{eq:pe} coincides exactly with the classical conditional excitation condition
\begin{equation}\label{conditional pe}
\lambda_{\min}\!\Bigl(
\mathbb{E}\Bigl[\sum_{i=jh+1}^{(j+1)h}\phi_i\phi_i^{\mathrm T}\,\big|\,\mathcal{F}_{jh}\Bigr]
\Bigr)\ge \gamma>0,\quad \text{a.s.},
\end{equation}
which is known to be necessary for the stability of tracking algorithms under mild conditions~\cite{guo1994stability}. Furthermore, when the predictor in~\eqref{eq:oracle_pred} admits the form $f_k(\phi_k^{\mathrm T}\vartheta)$ and the derivative $f_k'(\cdot)$ is bounded away from zero on the compact domain induced by Assumption~\ref{ass:boundedness}, it is not difficult to show that condition~\eqref{eq:pe} becomes equivalent to~\eqref{conditional pe} and therefore depends only on the regressor sequence.
\end{remark}

We establish theoretical guarantees for parameter tracking.
\begin{theorem}\label{Thm1}
Suppose that Assumptions~\ref{ass:boundedness}--\ref{ass:pe} hold.
With appropriately chosen small hyper-parameters $\alpha$, $\beta_1$ and $1-\beta_2$,
the parameter tracking error satisfies
\begin{equation}\label{error bound}
\begin{aligned}
\mathbb{E}\|\tilde{\theta}_{k}\|^2
=&
O\left(
\frac{[\alpha^2+\beta_1^2\delta_1^2+\delta_2]^{\frac{3}{2}}}
{\alpha\delta_1}+\frac{(\alpha^2+\beta_1^2)\delta_1C_w^2}{\alpha}+\frac{C_\Delta}{\alpha\delta_1}
\right)\\
&
+O\left(\lambda^{\eta k}\mathbb E\|\tilde\theta_0\|^2\right),
\end{aligned}
\end{equation}
where $\tilde{\theta}_k = \theta_k-\hat{\theta}_k$,
$\delta_1 \triangleq 1-\beta_1$, $\delta_2 \triangleq 1-\beta_2$, $\lambda \in (0, 1)$, $\eta \triangleq \frac{1}{2h}$, and $h$ is the excitation length defined in \eqref{eq:pe}.
Moreover, $C_\Delta$ and $C_w$ are the second-moment bounds in~\eqref{delta_bound} and~\eqref{w_bound}, respectively.
\end{theorem}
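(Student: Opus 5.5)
We will work with the augmented state $\tilde\Theta_k$ in \eqref{Theta definition} and derive a linear stochastic recursion of the form
\[
\tilde\Theta_{k+1}=(I-A_k)\tilde\Theta_k+\xi_{k+1},
\]
where $A_k$ collects the "excitation" part and $\xi_{k+1}$ collects the noise, drift and perturbation terms.

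\textbf{Deriving the recursion.} First, we use the nonexpansiveness of $\Pi_{\bar V_{k+1}^{1/2}}$ in the $\|\cdot\|_{\bar V_{k+1}^{1/2}}$ norm together with $\theta_{k+1}\in D$. This removes the projection at the price of an inequality rather than an identity.

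Next, we write $\ell_{k+1}g_k=\psi_k(\hat\theta_k)g_k+w_{k+1}g_k$. Using the mean-value form of the sector and Lipschitz conditions in Assumption~\ref{ass:grad-noise}, we express the drift part as $\psi_k(\hat\theta_k)g_k=-a_k g_kg_k^{\mathrm T}\tilde\theta_k$. Here $a_k$ is a scalar with $a_k\ge\mu$ and $|a_k|\,\|g_k\|\le L_\psi$, obtained by taking $a_k=\psi_k/(g_k^{\mathrm T}(\hat\theta_k-\theta_k))$ on the relevant set; the complementary part is a bounded remainder. With the scalings $\bar V^{\pm1/4}$ this gives a $2\times2$ block matrix. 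Its diagonal entries are $I-\alpha\delta_1 a_k\bar V^{-1/4}g_kg_k^{\mathrm T}\bar V^{-1/4}$ and $\beta_1 I$, and its off-diagonal blocks are of order $\alpha\beta_1$ and $\delta_1 a_k\beta_1\,\bar V^{-1/4}g_kg_k^{\mathrm T}\bar V^{-1/4}$.

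The change of scaling from $\bar V_k$ to $\bar V_{k+1}$ is controlled by $\|\bar V_{k+1}^{1/4}\bar V_k^{-1/4}-I\|=O(\delta_2)$. This holds because $v_{k+1}-v_k=\delta_2(\ell^2g\odot g-v_k)$ is bounded by Assumption~\ref{ass:grad-noise}(i) and $\bar V_k\succeq c_0^2I$. We put this change into a perturbation term of relative size $O(\delta_2)$. The noise term $\xi_{k+1}$ then contains three pieces:
\begin{itemize}
\item[] the martingale part $(\alpha,\beta_1)\delta_1 w_{k+1}\bar V^{-1/4}g_k$;
\item[] the drift $\bar V_{k+1}^{1/4}\Delta_{k+1}$;
\item[] the $O(\alpha^2+\beta_1^2\delta_1^2+\delta_2)$ higher-order remainders.
\end{itemize}

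\textbf{Contraction of the random matrix products.} The key step, and the main obstacle, is exponential stability of $\prod_{i=j}^{k}(I-A_i)$ in the $L^2$ sense. We aim for
\[
\mathbb E\Big\|\prod_{i=jh}^{(j+1)h-1}(I-A_i)\Big\|^2 \le 1-c\,\alpha\delta_1\gamma,
\]
conditionally on $\mathcal F_{jh}$. Because the $m$-block has no excitation of its own and only contracts through $\beta_1$, we would first apply a similarity transform, following the augmentation idea of \cite{Jin2026MomentumLMS}. This transform decouples the fast mode (eigenvalue $\approx\beta_1$) from the slow mode, whose effective gain is $\alpha\delta_1\bar V^{-1/4}g g^{\mathrm T}\bar V^{-1/4}$ up to $O(\beta_1)$.

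Over a window of length $h$ we expand the product to first order. The cross terms are $O(\alpha^2+\beta_1^2\delta_1^2+\delta_2)$. We then invoke Assumption~\ref{ass:pe}, using $c_0^2I\preceq\bar V_k\preceq (M_\ell M_g+c_0)^2I$ and freezing $\bar V$ and $\hat\theta$ across the window. Freezing introduces errors of order $\alpha h$ and $\delta_2 h$ by the Lipschitz bound on $g_k$. The expansion yields a conditional contraction $1-c\alpha\delta_1\gamma+O((\alpha^2+\beta_1^2\delta_1^2+\delta_2)^{3/2})$, with the power $3/2$ arising from balancing the frozen-gain errors by Cauchy–Schwarz. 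Iterating via the tower property gives the factor $\lambda^{\eta k}$ with $\eta=1/(2h)$; the square root comes from passing from squared norms to norms.

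\textbf{Assembling the bound.} Finally, we unfold the recursion by a variation-of-constants formula and take second moments. The martingale terms are orthogonal, so their contributions add up as $\sum\lambda^{\eta(k-i)}(\alpha^2+\beta_1^2)\delta_1^2C_w^2$. After dividing by the contraction rate $\alpha\delta_1$, this gives the term $(\alpha^2+\beta_1^2)\delta_1C_w^2/\alpha$. The drift terms are bounded using Cauchy–Schwarz and $\mathbb E\|\Delta\|^2\le C_\Delta^2$, together with boundedness of $\tilde\theta$ on $D$, which yields the term $C_\Delta/(\alpha\delta_1)$. The remainder terms give $[\alpha^2+\beta_1^2\delta_1^2+\delta_2]^{3/2}/(\alpha\delta_1)$.

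Since $\|\tilde\theta_k\|^2\le c_0^{-1}\|\tilde\Theta_k\|^2$ and $\|\tilde\Theta_0\|^2\le C\|\tilde\theta_0\|^2$ (because $m_0=0$), the bound \eqref{error bound} follows.
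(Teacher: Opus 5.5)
\textbf{There are genuine gaps.} Your plan breaks at its central step: unfolding $\tilde\Theta_{k+1}=(I-A_k)\tilde\Theta_k+\xi_{k+1}$ by variation of constants and using an $L^2$ contraction of $\prod_i(I-A_i)$ over a window. The problems are as follows.

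\emph{Projection.} As you note yourself, nonexpansiveness of $\Pi_{\bar V_{k+1}^{1/2}}$ only gives the norm inequality $\|\tilde\Theta_{k+1}\|\le\|(I-A_k)\tilde\Theta_k+\xi_{k+1}\|$. The projected state is not the linear image, and such one-step norm bounds cannot be chained into a bound of $\|\tilde\Theta_{k+h}\|$ by $\|\prod_i(I-A_i)\tilde\Theta_k+\cdots\|$. From $\|x_1\|\le\|M_0x_0\|$ nothing follows about $\|M_1x_1\|$ versus $\|M_1M_0x_0\|$, because the projection can move the vector out of the directions that the excitation contracts.

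\emph{Orthogonality.} Even ignoring projection, the summands $\Phi(k,i+1)\mathcal N_{i+1}w_{i+1}$ are not martingale differences. The transition matrices contain $A_l$, $l>i$, which depend on $\hat\theta_l$ and $\bar V_l$, and hence on $w_{i+1}$ and later noise. The orthogonality you invoke is exactly what would produce the factor $(\alpha^2+\beta_1^2)\delta_1/\alpha$ in front of $C_w^2$; a Cauchy--Schwarz or H\"older bound loses that factor of $\alpha$.

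\emph{Linearization.} Your linearization is not supported by Assumption~\ref{ass:grad-noise}. With $a_k=\psi_k/(g_k^{\mathrm T}(\hat\theta_k-\theta_k))$ you do get $a_k\ge\mu$. But the Lipschitz bound only gives $a_k\|g_k\|\le L_\psi\|g_k\|\|\tilde\theta_k\|/|g_k^{\mathrm T}\tilde\theta_k|$. This is at least $L_\psi$ and is unbounded when $g_k$ is nearly orthogonal to $\tilde\theta_k$. So $A_k$ need not be $O(\alpha)$, and the first-order expansion of the window product is unavailable. Moreover, on $\{g_k^{\mathrm T}\tilde\theta_k=0\}$ the leftover $\alpha\delta_1\psi_kg_k$ is first order in $\alpha$, not a higher-order remainder.

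\emph{How the paper avoids these.} It never forms matrix products or a variation-of-constants formula. Instead it expands the scalar one-step inequality \eqref{Lyapunov equation} for $\|\tilde\Theta_{k+1}\|^2$, which survives the projection. The sector condition enters only as the quadratic-form bound $2a\psi_kg_k^{\mathrm T}\tilde\theta_k\le-2a\mu(g_k^{\mathrm T}\tilde\theta_k)^2$, which produces $S_k$ without ever bounding a gain. The leading noise cross term $2a\,g_k^{\mathrm T}\tilde\theta_kw_{k+1}$ has zero mean at each step, because its coefficient is $\mathcal F_k$-measurable. The paper then:
\begin{enumerate}
\item sums over blocks $\mathcal I_j$ of length $\bar h=2h$;
\item replaces $\tilde\Theta_i$ by $\tilde\Theta_{j\bar h}$ using the increment bound of Lemma~\ref{lemma2};
\item applies the conditional excitation of Lemma~\ref{lemma1} to obtain the block contraction of Lemma~\ref{lemma5}.
\end{enumerate}
In Lemma~\ref{lemma5} the $3/2$ power is an additive term: the product of the first-order coefficient $\alpha+\beta_1^2+\sqrt{\delta_2}$ and the increment size $\alpha^2+\beta_1^2\delta_1^2+\delta_2$. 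It is not a perturbation of the contraction factor. Likewise, $\eta=1/(2h)$ just reflects the block length, not a square root.

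\emph{The $\delta_2$ estimate.} Your claim $\|\bar V_{k+1}^{1/4}\bar V_k^{-1/4}-I\|=O(\delta_2)$ is false. The offset $c_0$ is added after the square root, and $\sqrt{v}$ is not Lipschitz at $0$: with $v_k=0$ one gets $\sqrt{v_{k+1}}=\sqrt{\delta_2}\,|\ell_{k+1}g_k|$ componentwise. The correct order is $\sqrt{\delta_2}$. This is why Remark~\ref{remark:param-choice} requires $\delta_2=O(\alpha^2)$, so that this perturbation of the Lyapunov function is dominated by the contraction $a\gamma_0$.
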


The proof of Theorem \ref{Thm1} is given in Section \ref{proofsmain}.

\begin{remark}\label{remark of Thm1}
The bound~\eqref{error bound} provides an explicit and interpretable characterization of Adam’s \emph{parameter tracking} performance in time-varying and nonstationary settings. It further yields meaningful prediction guarantees, which are closely related to parameter identifiability, as identifiable parameters facilitate reliable prediction and generalization to unseen data (see, e.g.,~\cite{SIAMParameter}). Such tracking-type guarantees for Adam  have remained largely absent from the existing theory, even for the special case of constant model parameters.

We now provide some discussions of the bounds on the RHS of~\eqref{error bound}. One may try to optimize the upper-bound by selecting the triple hyperparameter ($\alpha$, $\beta_1$, $\beta_2$) over $(0,1) \times (0,1)\times (0,1)$.
Intuitively, for given stepsize $\alpha$, if both $\beta_1$ and  $1-\beta_2$ close to 0, then the upper bound  will be reduced to near $O(\alpha^2+\alpha C_w^2 + \frac{C_{\Delta}}{\alpha})$.  It is interesting to note that when we take $\beta_1=0$ and $\beta_2=1$, the Adam will become the standard LMS algorithm, in this case, the steady state term on the RHS of \eqref{error bound} will be simplified to  $O(\alpha^2+\alpha C_w^2 + \frac{C_{\Delta}}{\alpha})$, the last two main terms of this bound will coincide with that of the standard LMS (see Theorem 4 in \cite{guo1997lms}), demonstrating a clear trade off between noise sensitivity and parameter variation.  

\end{remark}

\begin{remark}\label{remark:param-choice}
An explicit admissible choice of the hyper-parameters in Theorem~\ref{Thm1}
can be constructed as follows: 
\begin{subequations}\label{eq:param_bounds}
\begin{align}
&0<\alpha \le \min\Bigl\{\bar\alpha,\alpha_h,\tfrac{c_0}{4h\mu M_g^2},1\Bigr\}, \\
& 0\leq\beta_1 \le \min\Bigl\{\bar{\beta}_1,\sqrt{\tfrac{(1-\bar{\beta}_1)\alpha\gamma_0}{3(4h+\gamma_0)\mathcal{U}}}\Bigr\}, \\
&0<1 -\beta_2 \leq\min\Bigl\{1-\underline{\beta_2},\beta_{2,h},
\Bigl[\tfrac{\alpha(1-\beta_1)c_0\gamma_0}{3(4h+\gamma_0)M_{\ell}M_g}\Bigr]^2\Bigr\} ,
\end{align}
\end{subequations}
where $0<\bar{\beta}_1, \underline{\beta_2}<1$ are arbitrary constants, and
\begin{align}
\gamma_0 & \triangleq \frac{\mu\gamma}{4(M_{\ell}M_g+c_0)},  \label{gammma0}\\
\mathcal{U}&\triangleq
\frac{2L_{\psi}^2M_g^2}{c_0^2}
+\frac{L_{\psi} M_g \sqrt{M_{\ell} M_g + c_0}}{c_0^{3/2}}
+\frac{2}{\sqrt{\underline{\beta_2}}}, \label{mathcalU} \\
\bar\alpha &\triangleq
\frac{(1-\bar{\beta}_1)\gamma_0}{6\gamma_0^2+24h^2+3(4h+\gamma_0)\mathcal{U}}, \\
\alpha_h &\triangleq
\sqrt{\dfrac{3c_0^3\gamma_0}{2\mu h(h+1)(2h+1)L_g^2M_{\ell}^2M_g^2}},\\
\beta_{2,h} &\triangleq
\dfrac{2c_0^3\gamma_0}{\mu\,h(h+1)M_{\ell}^2M_g^4}.
\end{align}
\end{remark}

\subsection{Prediction performance}

So far, we have established parameter-tracking guarantees for Adam, which may secure the performance of prediction for general data sequence. These guarantees, however, rely on somewhat stringent data excitation conditions. Fortunately, such conditions turn out to be not necessary for guaranteed adaptive prediction, as will be shown in Theorem \ref{Thm2} below.

We need the following prediction-oriented sector assumption which is weaker than the usual convexity condition.

\begin{assumption}\label{ass:regret-sector}
Let $g_k(\vartheta)$ be the Clarke selection in~\eqref{eq:subgradient_def} and
$\psi_k(\vartheta)$ and $w_{k+1}(\vartheta)$ be defined in~\eqref{decomposition noise}.
Under Assumption~\ref{ass:boundedness}, there exist constants $M_{\psi}, M_g,C_w,\kappa>0$, such that for all $k\ge0$ and all $\vartheta\in D$, a.s.,
\begin{equation} \label{CW2}
|\psi_k(\vartheta)|\le M_\psi, ~~\|g_k(\vartheta)\|\le M_g,
~~
\mathbb{E}|w_{k+1}(\vartheta)|^2\le C_w^2,
\end{equation}
and, for some $p\in[1,2]$,
\begin{equation}\label{eq:sector_regret}
\psi_k(\vartheta)\, g_k(\vartheta)^{\mathrm T}(\vartheta-\theta_k)
\ge
\kappa \Bigl(\mathcal{L}\bigl(f_k(\phi_k,\theta_k),\,f_k(\phi_k,\vartheta)\bigr)\Bigr)^{\frac{2}{p}},
\end{equation}
\end{assumption}

\begin{remark}
Assumption~\ref{ass:regret-sector} will be used for prediction-error analysis and is strictly weaker than Assumption~\ref{ass:grad-noise} in two respects.
First, it allows the Clarke selection $\ell_{k+1}(\vartheta)=\psi_k(\vartheta)+w_{k+1}(\vartheta)$ to be almost surely unbounded, since the noise term $w_{k+1}(\vartheta)$ need not be bounded almost surely and is only required to have a uniformly bounded second moment.
Second, it imposes no Lipschitz-type regularity on $\psi_k(\cdot)$ or $g_k(\cdot)$, as such regularity is typically needed for parameter tracking but not for  output prediction guarantees.

Condition~\eqref{eq:sector_regret} is inspired by~\cite{liu2026gradient} and is in the spirit of the gradient--loss coercivity
conditions widely used in online optimization and learning (see, e.g.,~\cite{ShalevShwartz}). It provides a prediction-error descent-type control, enabling prediction-error guarantees without requiring loss convexity.
The parameter $p\in[1,2]$ is introduced to accommodate a broader class of prediction losses by allowing different
growth geometries, with the exponent $2/p$ mapping $p$-power losses (e.g., $|y_{k+1}-\hat y_{k+1}|^p$) to a common
squared-error--type scale, thereby covering the absolute-loss regime ($p=1$) and the quadratic-loss regime ($p=2$).
\end{remark}

\begin{theorem}\label{Thm2}
Suppose that Assumptions~\ref{ass:boundedness} and~\ref{ass:regret-sector} hold. Then,
\begin{equation}\label{regret}
\begin{aligned}
&\frac{1}{n}\sum_{k=1}^{n}\mathbb{E}\mathcal{L}\!\left(f_k(\phi_k,\theta_k),\,f_k(\phi_k,\hat{\theta}_k)\right)\\
=&
O\Biggl(
\Biggl[
\frac{\alpha^2+\beta_1^2\delta_1^2+\sqrt{\delta_2}}{\alpha\delta_1}
+\frac{(\alpha^2+\sqrt{\delta_2})C_w}{\alpha\delta_1}\\
&\qquad+\frac{(\alpha^2+\beta_1^2)\delta_1C_w^2}{\alpha}
+\frac{(1+C_w)C_\Delta}{\alpha\delta_1}
\Biggr]^{\frac{p}{2}}
\Biggr)\\
&\quad+O\!\left((\alpha\delta_1 n)^{-\frac{p}{2}}\right).
\end{aligned}
\end{equation}
provided that 
$
    \alpha^2+\beta_1^2<\sqrt{\beta_2},
$
where $\delta_1 = 1-\beta_1$, $\delta_2 = 1-\beta_2$, $C_\Delta$ and $C_{w}$ are the constants defined in \eqref{delta_bound} and \eqref{CW2} respectively, $p\in[1,2]$ is defined in Assumption~\ref{ass:regret-sector}.
\end{theorem}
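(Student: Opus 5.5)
The plan is to run a one-step Lyapunov argument on $\|\tilde\Theta_k\|^2$ from \eqref{Theta definition}, sum it over time, and then pass from squared gradient alignments to the prediction loss via \eqref{eq:sector_regret} and Jensen's inequality. No excitation is needed, because we never try to contract $\|\tilde\Theta_k\|^2$ itself. Instead we bound only the time average of the ``descent term'' $\psi_k(\hat\theta_k)g_k^{\mathrm T}(\hat\theta_k-\theta_k)$.

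\textbf{Step 1: expand the tracking part.} Write $\tilde\theta_{k+1}=\Delta_{k+1}+\theta_k-\hat\theta_{k+1}$. Since $\theta_k\in D$, the projection $\Pi_{\bar V_{k+1}^{1/2}}$ is nonexpansive in $\|\cdot\|_{\bar V_{k+1}^{1/2}}$. This gives
\begin{equation*}
\|\bar V_{k+1}^{\frac14}(\theta_k-\hat\theta_{k+1})\|^2\le \|\bar V_{k+1}^{\frac14}\tilde\theta_k+\alpha\bar V_{k+1}^{-\frac14}m_{k+1}\|^2 .
\end{equation*}
Substitute $m_{k+1}=\beta_1m_k+\delta_1\ell_{k+1}g_k$ and expand. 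The cross term $2\alpha\delta_1\ell_{k+1}g_k^{\mathrm T}\tilde\theta_k$ splits into two pieces.
\begin{itemize}
\item The $\psi_k$ piece is $-2\alpha\delta_1\psi_k g_k^{\mathrm T}(\hat\theta_k-\theta_k)$, which is nonpositive and bounded by \eqref{eq:sector_regret}.
\item The $w_{k+1}$ piece has zero conditional mean given $\mathcal F_k$, except for the mismatch created by replacing $\bar V_{k+1}$ with $\bar V_k$.
\end{itemize}

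\textbf{Step 2: the momentum coordinate and the second-moment mismatch.} For the second coordinate, $\beta_1^2\|\bar V_{k+1}^{-1/4}m_{k+1}\|^2$, use $\|m_{k+1}\|^2\le\beta_1\|m_k\|^2+\delta_1\ell_{k+1}^2\|g_k\|^2$. Its conditional expectation is bounded by $M_g^2(M_\psi^2+C_w^2)$.

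The mismatch between $\bar V_{k+1}$ and $\bar V_k$ is controlled by two facts:
\begin{equation*}
\bar V_{k+1}\succeq \beta_2\bar V_k\quad\text{(since } v_{k+1}\ge\beta_2 v_k\text{)},\qquad
\|\bar V_{k+1}^{\pm\frac14}\bar V_k^{\mp\frac14}-I\|\lesssim\sqrt{\delta_2}\,|\ell_{k+1}|M_g/c_0 .
\end{equation*}
The first fact explains the proviso $\alpha^2+\beta_1^2<\sqrt{\beta_2}$: it keeps the momentum coordinate contracting, with factor roughly $(\alpha^2+\beta_1^2)/\sqrt{\beta_2}<1$, after rescaling. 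The second fact produces the $\sqrt{\delta_2}$ and $\sqrt{\delta_2}C_w$ terms, because $\mathbb E|\ell_{k+1}|\le M_\psi+C_w$ is all we may use; $\ell$ is not bounded a.s. The drift enters through $2\tilde\theta^{\mathrm T}\bar V^{1/2}\Delta_{k+1}$. Since $D$ is compact, this cross term is bounded by $O(\mathbb E\|\Delta_{k+1}\|)\le O(C_\Delta)$, and it also picks up a factor $(1+C_w)$ through the scale of $\bar V$.

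\textbf{Step 3: sum and convert to the loss.} The resulting inequality has the form
\begin{equation*}
\mathbb E\|\tilde\Theta_{k+1}\|^2\le\mathbb E\|\tilde\Theta_k\|^2-2\alpha\delta_1\kappa\,\mathbb E\mathcal L_k^{2/p}+R,
\end{equation*}
where $\mathcal L_k$ is the loss at time $k$ and $R$ collects
\begin{equation*}
\alpha^2+\beta_1^2\delta_1^2+\sqrt{\delta_2},\quad (\alpha^2+\sqrt{\delta_2})C_w,\quad (\alpha^2+\beta_1^2)\delta_1^2C_w^2,\quad (1+C_w)C_\Delta .
\end{equation*}
Telescope from $1$ to $n$ and divide by $2\alpha\delta_1\kappa n$. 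Since $\|\tilde\Theta_0\|^2$ is bounded on compact $D$, this yields
\begin{equation*}
\frac1n\sum_{k=1}^n\mathbb E\mathcal L_k^{2/p}\le \frac{R}{\alpha\delta_1}+O\!\left(\frac{1}{\alpha\delta_1 n}\right).
\end{equation*}
Since $2/p\ge1$, Jensen (concavity of $x\mapsto x^{p/2}$) gives $\frac1n\sum\mathbb E\mathcal L_k\le(\frac1n\sum\mathbb E\mathcal L_k^{2/p})^{p/2}$. Subadditivity of $x^{p/2}$ then splits off the $(\alpha\delta_1n)^{-p/2}$ term, which is \eqref{regret}.

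\textbf{Main obstacle.} The hardest step is Step 2: bounding the cross terms that couple $m_k$, $\tilde\theta_k$ and the change of scaling $\bar V_{k+1}$ versus $\bar V_k$ without almost-sure bounds on $\ell_{k+1}$. My first attempt would apply Young's inequality to split $2\alpha\beta_1\tilde\theta^{\mathrm T}m$-type products into a part absorbed by the momentum contraction and a part charged to $R$. I would use the compactness of $D$ to bound $\|\tilde\theta_k\|$ deterministically, and conditional Cauchy–Schwarz to turn $\mathbb E[|w_{k+1}|\,\cdot]$ into $C_w$ factors.
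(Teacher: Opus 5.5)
Your plan is essentially the paper's proof: the same Lyapunov function $\|\tilde\Theta_k\|^2$ expanded via nonexpansiveness of the weighted projection, and the sector-condition descent term $-2\alpha\delta_1\kappa\mathcal{L}_k^{2/p}$. It also uses the same Young split of $2\alpha\beta_1\tilde\theta_k^{\mathrm T}m_k$, whose $m_k$-part is absorbed via $\bar V_{k+1}^{-1/2}\preceq\beta_2^{-1/2}\bar V_k^{-1/2}$; the paper fixes $r$ with $\frac1r+\frac{(r+2)(\alpha^2+\beta_1^2)}{r\sqrt{\beta_2}}\le 1$, which is where $\alpha^2+\beta_1^2<\sqrt{\beta_2}$ enters. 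The rest also matches: compactness of $D$ and Cauchy--Schwarz for the remaining terms, telescoping, and H\"older/Jensen plus subadditivity of $x^{p/2}$.

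One detail to adjust is the change of scaling on the $\tilde\theta$-coordinate. The paper uses the additive bound $\bar V_{k+1}^{1/2}\preceq\bar V_k^{1/2}+\sqrt{\delta_2}(\Phi_{k+1}^{1/2}+c_0I)$ together with $\|\tilde\theta_k\|\le 2L$, which gives exactly the $\sqrt{\delta_2}(1+C_w)$ contribution. Your operator-norm ratio bound would instead multiply $|\ell_{k+1}|$ (and, after squaring, $\ell_{k+1}^2$) by the a.s.\ unbounded $\|\bar V_k^{1/2}\|$. That costs an extra $C_w$ factor, or requires conditional moments of $w_{k+1}$ that Assumption~\ref{ass:regret-sector} does not provide.
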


The proof of Theorem \ref{Thm2} is given in Section \ref{proofsmain}.

\begin{remark}\label{regret_intuitive}
The bound~\eqref{regret} provides an output prediction guarantee without requiring any excitation condition on the data. It exhibits a qualitative dependence on the hyperparameters, gradient noise, and parameter variation,
that is different from \eqref{error bound}. The optimization of the RHS of \eqref{regret} over the triple $(\alpha$, $\beta_1$, $\beta_2)$ is a complicated issue, and we only give some heuristic discussions here. It is not difficult to see intuitively that if $\delta_2=1-\beta_2$ and  $\beta_1^2\alpha^{-1}$ decrease, then the upper bound will also decrease. Hence we may take $\beta_2$ to be close to 1 and $\beta_1^2\alpha^{-1}$ to be close to 0. Consequently, the upper bound will be simplified to $O(\alpha (1+C_w+C_w^2)+ \frac{(1+C_w)C_{\Delta}}{\alpha})$, which can be minimized by taking $\alpha = \sqrt{(1+C_w)(1+C_w+C_w^2)C_{\Delta}}$. 

\end{remark}

\begin{remark}
We now compare the output prediction bound~\eqref{regret} with representative analyses of Adam-type methods developed for static learning regimes with time-invariant system parameters under i.i.d.\ or stationary ergodic data. 

First, the dependence on the second momentum parameter $\beta_2$ is at most of order $\sqrt{\delta_2}=\sqrt{1-\beta_2}$ in \eqref{regret}, which vanishes as $\beta_2\to1$ and is therefore much better than the common $(1-\beta_2)^{-1}$-type factors in existing prediction bounds
(e.g., $(1-\beta_2)^{-1/2}$; see~\cite{reddi2018convergence, alacaoglu2020regret}). Consequently, \eqref{regret} remains
well-conditioned for  $\beta_2$ close to 1, consistent with both the observations in ~\cite{Zhang2022AdamConverge} and our simulations in Section \ref{simulation}.

Second, whereas most existing analyses are confined to static regimes with time-invariant parameters and gradient noise only, \eqref{regret} remains valid in time-varying and nonstationary settings where gradient noise and parameter drift coexist.
The bound separates and quantifies these two sources of error within a single guarantee, thereby extending output prediction theory well beyond time-invariant parameter models with i.i.d. datasets.

Overall, our results lead to sharper hyperparameter scalings and a clearer theoretical characterization of Adam-type methods in the general time-varying and nonstationary scenarios.
\end{remark}

\section{Proofs of the main theorems} \label{proofsmain}

In this section, we present the proofs of the main theorems of this paper.
For convenience of analysis, we first introduce some notation.
Let
\begin{equation}\label{def Ij}
\mathcal{I}_j \triangleq \{j\bar h,\dots,(j+1)\bar h-1\},~ j\ge 0,
~
\bar h \triangleq 2h,~ a \triangleq \alpha\delta_1,
\end{equation}
where $h$ is defined in Assumption~\ref{ass:pe}, $\delta_1 =1-\beta_1$.
We further introduce the following quantities, for ~$k\ge 0$,
\begin{equation}\label{notations 1}
\begin{aligned}
\psi_k &\triangleq \psi_k(\hat{\theta}_k),
&
w_{k+1} &\triangleq w_{k+1}(\hat{\theta}_k),\\
\Phi_{k+1} &\triangleq \ell_{k+1}^2 \operatorname{diag}\{g_k\odot g_k\},
&
S_k &\triangleq \mu\,\bar V_{k}^{-\frac{1}{4}} g_k g_k^{\mathrm T}
\bar V_{k}^{-\frac{1}{4}}.
\end{aligned}
\end{equation}

The purpose of this section is to present the main proof structure. 
Several technical estimates are stated as lemmas. Their proofs rely on 
lengthy but standard perturbation bounds for the adaptive scaling matrix, 
the non-expansiveness of the weighted projection, and the conditional 
excitation condition. To keep this short version concise, we omit these 
constant-level details and focus on the Lyapunov construction and the 
resulting block-recursive estimates.

\begin{lemma}\label{project}
(\cite{cheney2001})
Let $\Pi_Q(\cdot)$ be the projection mapping defined in
Definition~\ref{prodef}. Then, for any $x_1,x_2\in\mathbb{R}^{d_2}$,
\begin{equation}
\|\Pi_Q(x_1)-\Pi_Q(x_2)\|_Q
\;\le\;
\|x_1-x_2\|_Q.
\end{equation}
\end{lemma}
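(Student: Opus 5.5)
The proof reduces non-expansiveness to the variational characterization of the weighted projection onto a closed convex set. Since $Q\succ 0$, the map $\langle x,y\rangle_Q\triangleq x^{\mathrm T}Qy$ is an inner product on $\mathbb{R}^{d_2}$ with norm $\|\cdot\|_Q$. By Definition~\ref{prodef}, $\Pi_Q$ is the metric projection onto the compact convex set $D$ in the Hilbert space $(\mathbb{R}^{d_2},\langle\cdot,\cdot\rangle_Q)$.

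First, $\Pi_Q(x)$ is well defined and unique. Existence follows from compactness of $D$ and continuity of $\|x-\cdot\|_Q$. Uniqueness follows from strict convexity of $\|x-\cdot\|_Q^2$, which is a positive definite quadratic.

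Next, establish the obtuse-angle characterization: $p=\Pi_Q(x)$ if and only if
\begin{equation*}
\langle x-p,\,z-p\rangle_Q\le 0\quad \text{for all } z\in D.
\end{equation*}
To prove necessity, take any $z\in D$ and $t\in(0,1]$. By convexity, $p+t(z-p)\in D$, so minimality of $p$ gives
\begin{equation*}
\|x-p\|_Q^2\le \|x-p-t(z-p)\|_Q^2 .
\end{equation*}
Expanding the right-hand side, dividing by $t$ and letting $t\downarrow 0$ yields $-2\langle x-p,z-p\rangle_Q\ge 0$, which is the inequality. Only this direction is needed below.

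Now set $p_i=\Pi_Q(x_i)$ for $i=1,2$. Apply the inequality with $(x,p,z)=(x_1,p_1,p_2)$ and with $(x,p,z)=(x_2,p_2,p_1)$, then add the two results. This gives
\begin{equation*}
\langle (x_1-x_2)-(p_1-p_2),\,p_2-p_1\rangle_Q\le 0,
\end{equation*}
which rearranges to $\|p_1-p_2\|_Q^2\le \langle x_1-x_2,\,p_1-p_2\rangle_Q$. The Cauchy--Schwarz inequality in the $Q$-inner product bounds the right-hand side by $\|x_1-x_2\|_Q\|p_1-p_2\|_Q$. Dividing by $\|p_1-p_2\|_Q$ when it is nonzero, and noting the claim is trivial when it is zero, gives $\|\Pi_Q(x_1)-\Pi_Q(x_2)\|_Q\le\|x_1-x_2\|_Q$.

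There is no real obstacle here. The one point to handle with care is that every step uses the $Q$-inner product consistently, which is why the estimate holds in the $\|\cdot\|_Q$ norm and not in the Euclidean norm. This is exactly why the algorithm projects in the metric $\bar V_{k+1}^{1/2}$ that matches its preconditioner.
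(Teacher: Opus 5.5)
Your proof is correct, and it follows the standard argument. The paper gives no proof of its own and simply cites Cheney; your route (treating $\Pi_Q$ as the metric projection onto the compact convex set $D$ in the inner product $\langle x,y\rangle_Q=x^{\mathrm T}Qy$, deriving the obtuse-angle inequality, adding its two instances and applying Cauchy--Schwarz) is the textbook proof behind that citation, and it is valid here because $Q=\bar V_{k+1}^{1/2}$ is diagonal with positive entries, so $\langle\cdot,\cdot\rangle_Q$ is a genuine inner product.
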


\begin{lemma}\label{lemma1}
Under the conditions of Theorem \ref{Thm1}, one can deduce that 
\begin{equation}\label{PE2}
\mathbb{E}\Bigl[\sum_{i\in\mathcal{I}_j}S_i\Big\rvert \mathcal{F}_{j \bar{h}}\Bigr]\succeq \gamma_0 I\succeq \mathbf{0}, ~j\geq 0, ~ \text { a.s.},
\end{equation}
where $\gamma_0 = \frac{\mu\gamma}{4(M_{\ell}M_g+c_0)}$, $\bar{h} =2h$.
\end{lemma}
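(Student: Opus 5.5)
The plan is to fix a deterministic unit vector $x\in\mathbb{R}^{d_2}$, set $\tau\triangleq j\bar h=2jh$, and lower-bound
\[
x^{\mathrm T}\,\mathbb{E}\Bigl[\sum_{i\in\mathcal I_j}S_i\,\Big|\,\mathcal F_\tau\Bigr]x=\mu\,\mathbb{E}\Bigl[\sum_{i\in\mathcal I_j}\bigl(g_i^{\mathrm T}\bar V_i^{-\frac14}x\bigr)^2\,\Big|\,\mathcal F_\tau\Bigr].
\]
Each summand is nonnegative, so I may discard the indices of $\mathcal I_j$ outside the excitation block $\{2jh+1,\dots,(2j+1)h\}$. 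This block lies inside $\mathcal I_j=\{2jh,\dots,2jh+2h-1\}$, and its conditioning $\sigma$-algebra in Assumption~\ref{ass:pe} is exactly $\mathcal F_{2jh}=\mathcal F_\tau$. This alignment is the reason for working with the doubled window $\bar h=2h$.

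\textbf{Uniform bound on the scaling.} By Assumption~\ref{ass:grad-noise}(i), every coordinate of $\ell_{k+1}^2(g_k\odot g_k)$ lies in $[0,M_\ell^2M_g^2]$. Since $v_k$ is a convex combination of such vectors (and $v_0=0$), $0\preceq V_k\preceq M_\ell^2M_g^2 I$. Hence
\[
c_0I\preceq \bar V_k^{\frac12}\preceq (M_\ell M_g+c_0)I \quad\text{for all } k .
\]

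\textbf{Freezing.} Put $\vartheta\triangleq\hat\theta_\tau\in D$ and $y\triangleq\bar V_\tau^{-\frac14}x$, both $\mathcal F_\tau$-measurable, so that $\|y\|^2\ge (M_\ell M_g+c_0)^{-1}\|x\|^2$. For $i$ in the block, decompose
\[
g_i^{\mathrm T}\bar V_i^{-\frac14}x=a_i+e_i,\qquad a_i\triangleq g_i(\vartheta)^{\mathrm T}y,
\]
\[
e_i\triangleq \bigl(g_i(\hat\theta_i)-g_i(\vartheta)\bigr)^{\mathrm T}\bar V_i^{-\frac14}x+g_i(\vartheta)^{\mathrm T}\bigl(\bar V_i^{-\frac14}-\bar V_\tau^{-\frac14}\bigr)x .
\]
Then $(a_i+e_i)^2\ge \tfrac12 a_i^2-e_i^2$.

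\textbf{Main term.} Apply Assumption~\ref{ass:pe} at the $\mathcal F_\tau$-measurable point $\vartheta$. This needs a short justification, since the infimum there is over deterministic $\vartheta$. I would use that $\vartheta\mapsto\mathbb{E}[\sum_i g_i(\vartheta)g_i(\vartheta)^{\mathrm T}\mid\mathcal F_\tau]$ is Lipschitz on $D$ by Assumption~\ref{ass:grad-noise}(iii). Approximating $\vartheta$ by countably-valued $\mathcal F_\tau$-measurable variables then passes the bound through. This gives
\[
\mathbb{E}\Bigl[\sum_i a_i^2\,\Big|\,\mathcal F_\tau\Bigr]\ge\gamma\|y\|^2\ge\frac{\gamma}{M_\ell M_g+c_0}\|x\|^2 .
\]

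\textbf{Error terms.} I would bound the errors deterministically, using the smallness of $\alpha$ and $\delta_2$.
\begin{itemize}
\item Parameter drift. Each Adam step moves the estimate by at most $\alpha\|\bar V^{-\frac12}m\|\le \alpha M_\ell M_g/c_0$. This uses Lemma~\ref{project} with $x_2=\hat\theta_k\in D$ together with the equivalence of the weighted and Euclidean norms from the scaling bound. Hence $\|\hat\theta_i-\vartheta\|\le (i-\tau)\alpha M_\ell M_g/c_0$ up to a condition-number factor, and the first part of $e_i$ is $O\bigl(L_g(i-\tau)\alpha M_\ell M_g c_0^{-3/2}\bigr)$.
\item Scaling drift. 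Coordinatewise, $|v_i-v_\tau|\le(1-\beta_2^{i-\tau})M_\ell^2M_g^2\le (i-\tau)\delta_2M_\ell^2M_g^2$. Since $|\sqrt{u}-\sqrt{u'}|\le\sqrt{|u-u'|}$ and $t\mapsto (t+c_0)^{-1/2}$ is Lipschitz on $t\ge0$ with constant $\tfrac12 c_0^{-3/2}$, the second part of $e_i$ is $O\bigl(M_g^2M_\ell\sqrt{(i-\tau)\delta_2}\,c_0^{-3/2}\bigr)$.
\end{itemize}
Summing $e_i^2$ over $i-\tau=1,\dots,h$ produces the factors $h(h+1)(2h+1)\alpha^2$ and $h(h+1)\delta_2$. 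These are exactly the quantities controlled by $\alpha\le\alpha_h$ and $1-\beta_2\le\beta_{2,h}$ in Remark~\ref{remark:param-choice}.

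\textbf{Conclusion and main obstacle.} With those thresholds, $\mu\sum_i e_i^2\le \gamma_0\|x\|^2$ almost surely. Combining with the main term,
\[
x^{\mathrm T}\,\mathbb{E}\Bigl[\sum_{i\in\mathcal I_j}S_i\,\Big|\,\mathcal F_\tau\Bigr]x\ge\frac{\mu\gamma}{2(M_\ell M_g+c_0)}\|x\|^2-\gamma_0\|x\|^2=\gamma_0\|x\|^2 .
\]
This yields \eqref{PE2}. To get the matrix inequality almost surely, rather than for each fixed $x$ almost surely, I would take $x$ over a countable dense set and use continuity in $x$. I expect the main obstacle to be the error bookkeeping: the perturbation of $\bar V^{-\frac14}$ and the projected-step bound in the $\bar V^{\frac12}$-norm have to be carried out with constants that land exactly on $\alpha_h$ and $\beta_{2,h}$. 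The measurable-$\vartheta$ use of the CE condition is a secondary but genuine technical point.
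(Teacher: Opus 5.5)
Your proposal is correct, and it is evidently the argument the paper has in mind (the paper omits this proof). Freezing at $(\hat\theta_{j\bar h},\bar V_{j\bar h})$, using $(a_i+e_i)^2\ge\tfrac12 a_i^2-e_i^2$, and giving each of the two parts $e_{i,1},e_{i,2}$ of $e_i$ a budget of $\gamma_0/2$ via $e_i^2\le 2e_{i,1}^2+2e_{i,2}^2$ reproduces exactly the constants $\gamma_0$, $\alpha_h$ and $\beta_{2,h}$ of Remark~\ref{remark:param-choice}. The condition-number factor you hedged on is not actually present: $\|\alpha\bar V_{k+1}^{-1/2}m_{k+1}\|_{Q_{k+1}}^2=\alpha^2 m_{k+1}^{\mathrm T}\bar V_{k+1}^{-1/2}m_{k+1}\le\alpha^2\|m_{k+1}\|^2/c_0$ and $\|z\|_{Q_{k+1}}^2\ge c_0\|z\|^2$ together give the per-step bound $\alpha M_\ell M_g/c_0$ exactly, which is precisely what $\alpha_h$ requires.
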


\begin{lemma}\label{lemma2}
Under Assumptions~\ref{ass:boundedness}--\ref{ass:grad-noise}, for any $j>i\ge 0$,
\begin{equation}\label{difference theta}
\begin{aligned}
\mathbb{E}\|\tilde{\Theta}_j-\tilde{\Theta}_i\|^2
\le\;&
3(M_{\ell}M_g + c_0) (j-i)^2
\Bigl(C_{\Delta}^2 + \tfrac{\alpha^2}{c_0^2}M_{\ell}^2M_g^2\Bigr) \\
&+(1-\beta_2^{j-i})M_{\ell}^2M_g^2
\Bigl(\tfrac{3L^2}{c_0} + \beta_1^2\tfrac{M_{\ell}^2M_g^2}{2c_0^3}\Bigr)\\
&+\beta_1^2(1-\beta_1^{j-i})^2\,
\tfrac{8M_{\ell}^2M_g^2}{c_0},
\end{aligned}
\end{equation}
where $\tilde{\Theta}_j$ is defined in \eqref{Theta definition}.
\end{lemma}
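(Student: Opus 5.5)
We bound the two blocks of $\tilde{\Theta}_j-\tilde{\Theta}_i$ separately and use $\|u+v+w\|^2\le 3(\|u\|^2+\|v\|^2+\|w\|^2)$ inside each block. The basic a priori facts come from Algorithm~\ref{Algorithm} and Assumptions~\ref{ass:boundedness}--\ref{ass:grad-noise}. Since $|\ell_{k+1}|\le M_\ell$ and $\|g_k\|\le M_g$, induction from $m_0=0$, $v_0=0$ gives $\|m_k\|\le M_\ell M_g$ and $0\le v_k\le M_\ell^2M_g^2$ componentwise. Hence
\[
c_0^2 I\preceq \bar V_k\preceq (M_\ell M_g+c_0)^2 I,
\]
so $\|\bar V_k^{1/4}\|^2\le M_\ell M_g+c_0$ and $\|\bar V_k^{-1/4}\|^2\le 1/c_0$. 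The update gives $\|\hat\theta_{k+1}-\hat\theta_k\|\le \alpha M_\ell M_g/c_0$. Because $\hat\theta_k\in D$, we can pass from the projected point to the unprojected one via Lemma~\ref{project}: compare $\Pi_{Q}(\hat\theta_k-\alpha\bar V^{-1/2}m)$ with $\Pi_Q(\hat\theta_k)=\hat\theta_k$ and use the equivalence of $\|\cdot\|_Q$ and $\|\cdot\|$. This costs at most constant factors, which we absorb into the displayed constants.

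\textbf{First block.} We write
\[
\bar V_j^{1/4}\tilde\theta_j-\bar V_i^{1/4}\tilde\theta_i=\bar V_j^{1/4}(\tilde\theta_j-\tilde\theta_i)+(\bar V_j^{1/4}-\bar V_i^{1/4})\tilde\theta_i ,
\]
with $\tilde\theta_j-\tilde\theta_i=\sum_{k=i+1}^{j}\Delta_k-(\hat\theta_j-\hat\theta_i)$. Cauchy--Schwarz gives $\mathbb E\|\sum\Delta_k\|^2\le (j-i)^2C_\Delta^2$ and $\|\hat\theta_j-\hat\theta_i\|^2\le (j-i)^2\alpha^2M_\ell^2M_g^2/c_0^2$. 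Multiplying by $\|\bar V_j^{1/4}\|^2\le M_\ell M_g+c_0$ produces the first line of \eqref{difference theta}. For the scaling perturbation, $v_j-\beta_2^{j-i}v_i=\sum_{k}\beta_2^{j-k}(1-\beta_2)\Phi_k$ has each diagonal entry at most $(1-\beta_2^{j-i})M_\ell^2M_g^2$, and the same holds for $v_i-v_j$ up to that factor. So we get
\[
|v_j-v_i|\le (1-\beta_2^{j-i})M_\ell^2M_g^2
\]
entrywise, which follows since each is a convex combination of quantities in $[0,M_\ell^2M_g^2]$ with the new mass $1-\beta_2^{j-i}$. The map $x\mapsto(\sqrt{x}+c_0)^{1/2}$ has derivative at most $1/(4\sqrt{x}\,c_0^{1/2})$, which is unbounded at $0$. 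We therefore instead use the square-root H\"older bound: $|(\sqrt{x}+c_0)^{1/2}-(\sqrt{y}+c_0)^{1/2}|^2\le |\sqrt x-\sqrt y|^2/(4c_0)\cdot$const, together with $|\sqrt x-\sqrt y|^2\le|x-y|$. Combined with $|x-y|\le (1-\beta_2^{j-i})M_\ell^2M_g^2$ and $\|\tilde\theta_i\|\le 2L$, this yields the term $(1-\beta_2^{j-i})M_\ell^2M_g^2\cdot 3L^2/c_0$.

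\textbf{Second block.} We write
\[
\beta_1(\bar V_j^{-1/4}m_j-\bar V_i^{-1/4}m_i)=\beta_1\bar V_j^{-1/4}(m_j-m_i)+\beta_1(\bar V_j^{-1/4}-\bar V_i^{-1/4})m_i .
\]
Unrolling gives $m_j-m_i=(\beta_1^{j-i}-1)m_i+\sum_{k}\beta_1^{j-k}\delta_1\ell_kg_{k-1}$. The factor $1-\beta_1^{j-i}$ in front of the sum is understood as the total weight, so $\|m_j-m_i\|\le 2(1-\beta_1^{j-i})M_\ell M_g$. Squaring and using $\|\bar V_j^{-1/4}\|^2\le1/c_0$ gives $\beta_1^2(1-\beta_1^{j-i})^2\cdot 4M_\ell^2M_g^2/c_0$, which the factor $2$ from splitting turns into the $8$. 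For the inverse-root perturbation, $|(\sqrt x+c_0)^{-1/2}-(\sqrt y+c_0)^{-1/2}|\le |\sqrt x-\sqrt y|/(2c_0^{3/2})$. With $\|m_i\|\le M_\ell M_g$ and the same $v$-perturbation bound, this gives $\beta_1^2(1-\beta_2^{j-i})M_\ell^4M_g^4/(2c_0^3)$ up to the stated constant.

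\textbf{Main obstacle.} The delicate step is the perturbation of the fractional powers $\bar V^{\pm1/4}$. A naive Lipschitz bound fails near $v=0$, so we must route through $|\sqrt x-\sqrt y|^2\le|x-y|$, which is why the bound is linear (not quadratic) in $1-\beta_2^{j-i}$. Beyond that, the work is only bookkeeping of constants, where we add the three-term splits so they match \eqref{difference theta}.
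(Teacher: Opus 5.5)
Your argument is correct. The paper omits the proof of this lemma, but its explicit constants are exactly what your decomposition produces:
\begin{itemize}
\item the three-term split of the first block, with $\|\bar V_k^{1/4}\|^2\le M_\ell M_g+c_0$, gives the first line;
\item $3\cdot 4L^2\cdot\tfrac{1}{4c_0}$ and $2\beta_1^2M_\ell^2M_g^2\cdot\tfrac{1}{4c_0^3}$ come from your H\"older-$\tfrac12$ perturbation bounds for $\bar V^{\pm 1/4}$;
\item $2\cdot\tfrac{4}{c_0}$ comes from $\|m_j-m_i\|\le 2(1-\beta_1^{j-i})M_\ell M_g$.
\end{itemize}
Because the constants are explicit, drop the hedges (``$\cdot$const'', ``absorb into the displayed constants''); no loss actually occurs. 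The H\"older constant is exactly $1$, since $(\sqrt{x}+c_0)^{1/2}+(\sqrt{y}+c_0)^{1/2}\ge 2\sqrt{c_0}$. The projection step should be done as $\|\hat\theta_{k+1}-\hat\theta_k\|^2\le c_0^{-1}\|\alpha\bar V_{k+1}^{-1/2}m_{k+1}\|_{Q_{k+1}}^2=\alpha^2c_0^{-1}m_{k+1}^{\mathrm T}\bar V_{k+1}^{-1/2}m_{k+1}\le\alpha^2M_\ell^2M_g^2/c_0^2$. Crude norm equivalence would instead cost a factor $(M_\ell M_g+c_0)/c_0$, which cannot be absorbed into the stated bound.
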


\begin{lemma}\label{lemma5}
Under the conditions of Theorem~\ref{Thm1}, there exist constants
$C_b>0$ and $\gamma_0>0$ such that, for all $j\ge0$,
\begin{equation}\label{block_contraction_short}
\begin{aligned}
\mathbb E\|\tilde\Theta_{(j+1)\bar h}\|^2
\leq&
(1-a\gamma_0)\mathbb E\|\tilde\Theta_{j\bar h}\|^2\\
&+C_b\Bigl[
(\alpha+\beta_1^2+\sqrt{\delta_2})
(C_\Delta^2+\alpha^2+\beta_1^2\delta_1^2+\delta_2)\\
&\qquad
+C_\Delta+C_\Delta^2
+(\alpha^2+\beta_1^2)\delta_1^2C_w^2
\Bigr],
\end{aligned}
\end{equation}
where $\bar h=2h$, $a=\alpha\delta_1$, $\delta_1=1-\beta_1$,
and $\delta_2=1-\beta_2$.
\end{lemma}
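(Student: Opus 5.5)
We need to prove Lemma \ref{lemma5}, the block contraction of $\mathbb E\|\tilde\Theta_k\|^2$ over one window $\mathcal I_j$.

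\textbf{Step 1: one-step recursion for the augmented state.} Write $\tilde\theta_{k+1}=\tilde\theta_k+\Delta_{k+1}+(\hat\theta_k-\hat\theta_{k+1})$. Using Lemma \ref{project} in the norm $\|\cdot\|_{\bar V_{k+1}^{1/2}}$, the projection can only decrease the weighted distance to $\theta_{k+1}\in D$. Hence, up to the projection residual, we may treat the update as unconstrained:
\[
\bar V_{k+1}^{1/4}\tilde\theta_{k+1}\approx \bar V_{k+1}^{1/4}(\tilde\theta_k+\Delta_{k+1})+\alpha\bar V_{k+1}^{-1/4}m_{k+1},
\]
with $m_{k+1}=\beta_1m_k+\delta_1(\psi_k+w_{k+1})g_k$. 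Next I replace $\bar V_{k+1}$ by $\bar V_k$. Since $\bar V_{k+1}-\bar V_k=O(\delta_2)$ entrywise and $\bar V\succeq c_0^2I$, the ratio satisfies $\|\bar V_{k+1}^{\pm1/4}\bar V_k^{\mp1/4}-I\|=O(\sqrt{\delta_2})$; the square root enters through $\sqrt{v}$. I then linearise $\psi_kg_k$ around $\theta_k$ via the sector condition (\ref{sector identification}). This gives
\[
\tilde\Theta_{k+1}=(I-A_k)\tilde\Theta_k+\xi_{k+1}+r_k,
\]
where $A_k$ is a $2\times2$ block matrix. Its $(1,1)$ block carries $a\,S_k$ (with $a=\alpha\delta_1$) and its off-diagonal blocks carry $\alpha$ and $\beta_1$. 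Here $\xi_{k+1}$ collects the martingale term $\delta_1 w_{k+1}g_k$ scaled by $\alpha$ and $\beta_1$, and $r_k$ collects drift and rescaling errors of size $O(\|\Delta_{k+1}\|+\sqrt{\delta_2}\|\tilde\Theta_k\|)$.

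\textbf{Step 2: iterate over $\mathcal I_j$ and freeze the state.} Iterate the recursion over the $\bar h=2h$ steps and expand the product $\prod_{i\in\mathcal I_j}(I-A_i)$. Keep only the first-order term $-\sum_i A_i$; the higher-order terms are $O(\bar h^2(\alpha^2+\beta_1^2))$. In the first-order term, replace $\tilde\Theta_i$ by $\tilde\Theta_{j\bar h}$, controlling the error with Lemma \ref{lemma2}. That lemma is exactly what produces the products $(\alpha+\beta_1^2+\sqrt{\delta_2})(C_\Delta^2+\alpha^2+\beta_1^2\delta_1^2+\delta_2)$: the factor $\alpha+\beta_1^2+\sqrt{\delta_2}$ is the size of $A_i$ or of the rescaling error, and the other factor is the displacement size from Lemma \ref{lemma2}. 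The Lipschitz bounds on $g_k$ and $\psi_k$ in Assumption \ref{ass:grad-noise}(iii) justify evaluating $g_i$ at a frozen $\vartheta$, and the resulting constraints produce $\alpha_h$ and $\beta_{2,h}$.

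\textbf{Step 3: extract contraction.} Take conditional expectation given $\mathcal F_{j\bar h}$. The cross terms with $\xi$ vanish by the martingale property, and the quadratic $\xi$ terms give $(\alpha^2+\beta_1^2)\delta_1^2C_w^2$. The main quadratic form is $-2a\,\tilde\Theta^{\mathrm T}\mathbb E[\sum S_i\mid\mathcal F_{j\bar h}]\tilde\Theta$ on the first block. Lemma \ref{lemma1} bounds this by $-2a\gamma_0\|\cdot\|^2$.

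\textbf{Main obstacle: the momentum block.} The momentum block receives no excitation. I would handle it with the $\beta_1^2$ factor in its definition: the $(2,2)$ block of $I-A_k$ is roughly $\beta_1 I$, so over $\bar h\ge2$ steps it contracts by at least $\beta_1^2\le$ small. The off-diagonal coupling $\alpha\beta_1$ is absorbed by Young's inequality with weight $\alpha\gamma_0$, which is where the bound $\beta_1^2\lesssim \alpha\gamma_0/\mathcal U$ enters. Drift cross terms $\mathbb E\langle\tilde\Theta,\Delta\rangle$ are bounded by $C_\Delta$ using boundedness of $D$ (so $\|\tilde\Theta\|=O(1)$), which gives the linear $C_\Delta$ term. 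Collecting all terms, with $\alpha$ small enough that the residual contraction loss is at most $a\gamma_0$, yields (\ref{block_contraction_short}).
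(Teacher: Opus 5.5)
\textbf{There is a genuine gap: the block-level vector recursion your Steps 1--3 are built on does not exist for this algorithm.} You iterate $\tilde\Theta_{k+1}=(I-A_k)\tilde\Theta_k+\xi_{k+1}+r_k$ over $\mathcal I_j$ by expanding $\prod_{i\in\mathcal I_j}(I-A_i)$. There are three problems with this.

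First, the projection is not a negligible residual. Write $z=\hat\theta_k-\alpha\bar V_{k+1}^{-1/2}m_{k+1}$. Lemma~\ref{project} only gives the norm inequality $\|\bar V_{k+1}^{1/4}\tilde\theta_{k+1}\|\le\|\bar V_{k+1}^{1/4}(\theta_{k+1}-z)\|$. Meanwhile $z-\Pi_{\bar V_{k+1}^{1/2}}(z)$ can be as large as the whole step $\alpha\bar V_{k+1}^{-1/2}m_{k+1}$. Its cross term with the state therefore has no small factor relative to the drift term $2a\psi_kg_k^{\mathrm T}\tilde\theta_k$ that produces the contraction. What tames it is non-expansiveness at that one step, which is a statement about norms and does not propagate through the later factors $I-A_l$ of your product.

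Second, the sector condition \eqref{sector identification} does not linearise $\psi_kg_k$. It is a scalar inequality for $\psi_kg_k^{\mathrm T}\tilde\theta_k$ at the current state only, and no differentiability is assumed. So it gives no $A_k$ whose $(1,1)$ block dominates $aS_k$ as a quadratic form. In particular it says nothing about the first-order term $\tilde\Theta_{j\bar h}^{\mathrm T}(\sum_iA_i)\tilde\Theta_{j\bar h}$, where $A_i$ is tested against $\tilde\Theta_{j\bar h}$ rather than $\tilde\Theta_i$.

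Third, in the expanded form the noise cross terms $\langle\prod_l(I-A_l)\tilde\Theta_{j\bar h},\prod_{l>i}(I-A_l)\xi_{i+1}\rangle$ do not vanish under $\mathbb E[\cdot\mid\mathcal F_{j\bar h}]$. The later factors depend on $w_{i+1}$ through $\hat\theta_{i+1}$ and $\bar V_{i+1}$. Separately, your claim that the higher-order terms are $O(\bar h^2(\alpha^2+\beta_1^2))$ conflicts with the $(2,2)$ block of $A_k$ being roughly $\delta_1 I=O(1)$, which you notice yourself later.

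\textbf{How to repair it.} Follow the route indicated by \eqref{critical 1}--\eqref{Lyapunov equation}: use the projection once per step in squared-norm form, and do all block-level work on scalar inequalities.
\begin{enumerate}
\item Expand $\|\mathcal T_{k+1}+\mathcal N_{k+1}w_{k+1}+\mathcal P_{k+1}\Delta_{k+1}\|^2$. Apply the sector condition only to the single cross term, giving $2a\psi_kg_k^{\mathrm T}\tilde\theta_k\le-2a\,x_k^{\mathrm T}S_kx_k$ with $x_k=\bar V_k^{1/4}\tilde\theta_k$.
\item From $V_{k+1}^{1/2}\preceq V_k^{1/2}+\sqrt{\delta_2}\,\Phi_{k+1}^{1/2}$, obtain $\tilde\theta_k^{\mathrm T}\bar V_{k+1}^{1/2}\tilde\theta_k\le(1+\sqrt{\delta_2}M_\ell M_gc_0^{-1})\|x_k\|^2$.
\item Use $\bar V_{k+1}^{-1/2}\preceq\beta_2^{-1/2}\bar V_k^{-1/2}$. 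Then the momentum part $y_k=\beta_1\bar V_k^{-1/4}m_k$ reappears with coefficient $(\alpha^2+\beta_1^2)\beta_2^{-1/2}+O(\alpha)<1$ after Young on $2\alpha x_k^{\mathrm T}y_k$. This is what dissipates $\|y_{j\bar h}\|^2$.
\item Only $2ag_k^{\mathrm T}\tilde\theta_kw_{k+1}$ is a martingale difference, because the factors $\bar V_{k+1}^{\pm1/4}$ cancel. The other noise cross terms contain $\bar V_{k+1}$, hence $w_{k+1}$. Split them by Young into $\psi_k^2$, $\|m_k\|^2$ and $w_{k+1}^2$ terms, which yields $(\alpha^2+\beta_1^2)\delta_1^2C_w^2$.
\item Sum over $\mathcal I_j$, freeze $x_k\to x_{j\bar h}$ in $x_k^{\mathrm T}S_kx_k$ using Lemma~\ref{lemma2}, and apply Lemma~\ref{lemma1}.
\end{enumerate}

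One further correction to your last step. The state-proportional losses $\sqrt{\delta_2}M_\ell M_gc_0^{-1}\|x_k\|^2$ and $O(\beta_1^2)\|x_k\|^2$ are not removed by taking $\alpha$ small. They must be dominated by $a\gamma_0$, which requires $\sqrt{\delta_2}\lesssim a\gamma_0$ and $\beta_1^2\lesssim\alpha\gamma_0/\mathcal U$. These are exactly the constraints in Remark~\ref{remark:param-choice}. With these changes, your freezing, drift and momentum-block arguments go through.
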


\noindent\hspace{1em}{\textbf{\itshape Proof of Theorem \ref{Thm1}:}}
Using \eqref{bar_Vk}, \eqref{update rule}, together with
Algorithm~\ref{Algorithm}, it follows under Assumption~\ref{ass:grad-noise} that,
for all $k\ge 0$,
\begin{equation}\label{critical 1}
    \begin{aligned}
        &\begin{bmatrix}
            \bar{V}_{k+1}^{\frac{1}{4}}(\theta_{k+1} - \hat{\theta}_k
            + \alpha \bar{V}_{k+1}^{-\frac{1}{2}}m_{k+1})\\
            \beta_1\bar{V}_{k+1}^{-\frac{1}{4}}m_{k+1}
        \end{bmatrix}\\
        =& \underbrace{\begin{bmatrix}
             \bar{V}_{k+1}^{\frac{1}{4}}\tilde{\theta}_{k}
             +a \bar{V}_{k+1}^{-\frac{1}{4}}g_k\psi_k
             + \alpha\beta_1\bar{V}_{k+1}^{-\frac{1}{4}}m_{k}\\
             \beta_1^2\bar{V}_{k+1}^{-\frac{1}{4}}m_{k}
             +\beta_1\delta_1\bar{V}_{k+1}^{-\frac{1}{4}}g_k\psi_k
        \end{bmatrix}}_{\mathcal T_{k+1}}\\
        &+
        \underbrace{\begin{bmatrix}
            a\bar{V}_{k+1}^{-\frac{1}{4}}g_k\\
            \beta_1\delta_1\bar{V}_{k+1}^{-\frac{1}{4}}g_k
        \end{bmatrix}}_{\mathcal{N}_{k+1}}w_{k+1}
        + \underbrace{\begin{bmatrix}
             \bar{V}_{k+1}^{\frac{1}{4}}\\
             \mathbf{0}
        \end{bmatrix}}_{\mathcal{P}_{k+1}}\Delta_{k+1},
    \end{aligned}
\end{equation}
where $a=\alpha\delta_1$, $\delta_1=1-\beta_1$, and
$\delta_2=1-\beta_2$.

We choose the stochastic Lyapunov function
\[
V_k^\Theta \triangleq \|\tilde{\Theta}_k\|^2,\qquad k\ge0.
\]
Then, by \eqref{bar_Vk}, \eqref{update rule}, \eqref{Theta definition},
\eqref{critical 1}, and Lemma~\ref{project}, we obtain
\begin{equation}\label{Lyapunov equation}
\begin{aligned}
    \|\tilde{\Theta}_{k+1}\|^2
    \leq&\;
    \mathcal{T}_{k+1}^{\mathrm{T}}\mathcal{T}_{k+1}
    +2\mathcal{T}_{k+1}^{\mathrm{T}}\mathcal{N}_{k+1}w_{k+1}
    +2\mathcal{T}_{k+1}^{\mathrm{T}}\mathcal{P}_{k+1}\Delta_{k+1} \\
    &+2\mathcal{N}_{k+1}^{\mathrm{T}}\mathcal{N}_{k+1}w_{k+1}^2
    +2\Delta_{k+1}^{\mathrm{T}}\mathcal{P}_{k+1}^{\mathrm{T}}
      \mathcal{P}_{k+1}\Delta_{k+1}.
\end{aligned}
\end{equation}
Here the last inequality follows by expanding the square and applying
$2ab\le a^2+b^2$ to the cross term between
$\mathcal{N}_{k+1}w_{k+1}$ and $\mathcal{P}_{k+1}\Delta_{k+1}$.

By applying the block contraction estimate in Lemma~\ref{lemma5} to
the one-step Lyapunov inequality \eqref{Lyapunov equation}, we obtain,
for all $j\ge0$,
\begin{equation}\label{very important compression}
\begin{aligned}
\mathbb E\|\tilde\Theta_{(j+1)\bar h}\|^2
\leq&
(1-a\gamma_0)\mathbb E\|\tilde\Theta_{j\bar h}\|^2\\
&+C_b\Bigl[
(\alpha+\beta_1^2+\sqrt{\delta_2})
(C_\Delta^2+\alpha^2+\beta_1^2\delta_1^2+\delta_2)\\
&\qquad
+C_\Delta+C_\Delta^2
+(\alpha^2+\beta_1^2)\delta_1^2C_w^2
\Bigr].
\end{aligned}
\end{equation}
For brevity, the constant-level perturbation estimates leading to
\eqref{very important compression} are omitted in this short version.

Denote the second term on the right-hand side of
\eqref{very important compression} by $B_{\alpha,\beta,\Delta,w}$, i.e.,
\begin{equation}\label{B_def}
\begin{aligned}
B_{\alpha,\beta,\Delta,w}
\triangleq C_b\Bigl[
&(\alpha+\beta_1^2+\sqrt{\delta_2})
(C_\Delta^2+\alpha^2+\beta_1^2\delta_1^2+\delta_2)\\
&+C_\Delta+C_\Delta^2
+(\alpha^2+\beta_1^2)\delta_1^2C_w^2
\Bigr].
\end{aligned}
\end{equation}
Since $1-a\gamma_0\in(0,1)$, iterating
\eqref{very important compression} gives, for any $j\ge1$,
\begin{equation}\label{eq:bh_iter}
\begin{aligned}
\mathbb{E}\|\tilde{\Theta}_{j\bar h}\|^2
\leq&
(1-a\gamma_0)^j\mathbb{E}\|\tilde{\Theta}_{0}\|^2
+B_{\alpha,\beta,\Delta,w}
\sum_{t=0}^{j-1}(1-a\gamma_0)^t\\
\leq&
(1-a\gamma_0)^j\mathbb{E}\|\tilde{\Theta}_{0}\|^2
+\frac{B_{\alpha,\beta,\Delta,w}}{a\gamma_0}.
\end{aligned}
\end{equation}

For a general time $k\ge0$, let
\[
j_1\triangleq \max\{j\ge0: j\bar h\le k\}.
\]
By Young's inequality, for any $\nu>0$,
\begin{equation}\label{eq:Theta_split_young}
\begin{aligned}
\mathbb E\|\tilde\Theta_k\|^2
&=\mathbb E\bigl\|
\tilde\Theta_{j_1\bar h}
+(\tilde\Theta_k-\tilde\Theta_{j_1\bar h})
\bigr\|^2\\
&\le
(1+\nu)\mathbb E\|\tilde\Theta_{j_1\bar h}\|^2
+\Bigl(1+\frac{1}{\nu}\Bigr)
\mathbb E\|\tilde\Theta_k-\tilde\Theta_{j_1\bar h}\|^2 .
\end{aligned}
\end{equation}
Since $0\le k-j_1\bar h\le \bar h-1$, Lemma~\ref{lemma2} implies that
there exists a constant $C_\Theta>0$ such that
\begin{equation}\label{within_block_short}
\mathbb E\|\tilde\Theta_k-\tilde\Theta_{j_1\bar h}\|^2
\le
C_\Theta
\bigl(C_\Delta^2+\alpha^2+\beta_1^2\delta_1^2+\delta_2\bigr).
\end{equation}
Combining \eqref{eq:bh_iter}--\eqref{within_block_short}, and taking
$\nu=1$ for simplicity, yields
\begin{equation}\label{differnence main}
\begin{aligned}
\mathbb E\|\tilde\Theta_k\|^2
=&
O\!\left((1-a\gamma_0)^{j_1}
\mathbb E\|\tilde\Theta_0\|^2\right)
+
O\!\left(\frac{B_{\alpha,\beta,\Delta,w}}{a}\right)\\
&+
O\!\left(
C_\Delta^2+\alpha^2+\beta_1^2\delta_1^2+\delta_2
\right).
\end{aligned}
\end{equation}

By \eqref{bar_Vk} and \eqref{Theta definition}, and since $m_0=0$ and
$V_0=\mathbf 0$, we have
\begin{equation}\label{theta_0}
\|\tilde{\Theta}_{0}\|^2
=
c_0\|\tilde{\theta}_0\|^2,
\qquad
\|\tilde{\Theta}_{k}\|^2
\ge
c_0\|\tilde{\theta}_{k}\|^2 .
\end{equation}
Moreover, for the admissible hyperparameter range considered in the theorem,
$\beta_1$ is bounded away from $1$, and hence
\[
\alpha+\beta_1^2+\sqrt{\delta_2}
=
O\!\left(
\sqrt{\alpha^2+\beta_1^2\delta_1^2+\delta_2}
\right).
\]
Since $a=\alpha\delta_1\le1$, the last term in
\eqref{differnence main} is absorbed into the corresponding
$O(\cdot/a)$ terms. In addition, because $\theta_k\in D$ and $D$ is compact,
the drift level is bounded, so the term $C_\Delta^2$ can be absorbed into
$C_\Delta$ up to a change of constants. Therefore, using
$a=\alpha\delta_1$ and setting
\[
\lambda\triangleq 1-a\gamma_0\in(0,1),
\]
we obtain
\begin{equation}\label{eq:theta_bh_final}
\begin{aligned}
\mathbb E\|\tilde\theta_{k}\|^2
\le&
O\left(
\frac{[\alpha^2+\beta_1^2\delta_1^2+\delta_2]^{\frac{3}{2}}}
{\alpha\delta_1}
\right)\\
&+
O\left(
\frac{(\alpha^2+\beta_1^2)\delta_1}{\alpha}C_w^2
\right)
+
O\left(
\frac{C_\Delta}{\alpha\delta_1}
\right)\\
&+
O\left(
\lambda^{j_1}\mathbb E\|\tilde\theta_0\|^2
\right).
\end{aligned}
\end{equation}
 This completes the proof.
$\hfill\blacksquare$

\noindent\hspace{1em}{\textbf{\itshape Proof of Theorem \ref{Thm2}:}} Under the same Lyapunov function $\|\tilde{\Theta}_k\|^2$, $k\ge0$, and following the same line of analysis as in \eqref{Lyapunov equation}, we have
\begin{equation}\label{important lyapunov}
    \begin{aligned}
&\|\tilde{\Theta}_{k+1}\|^2\\
\leq & \tilde{\theta}_k^{\mathrm T}\bar V_{k+1}^{\frac12}\tilde{\theta}_k
 +  2a\psi_kg_k^{\mathrm T}\tilde{\theta}_k
 + 2\alpha\beta_1\tilde{\theta}_k^{\mathrm T}m_{k}\\
 &+\bigl(\alpha^2+\beta_1^2\bigr)\delta_1^2\psi_k^2
   g_k^{\mathrm T}\bar V_{k+1}^{-\frac12}g_k
 + \bigl(\alpha^2+\beta_1^2\bigr)\beta_1^2
   m_{k}^{\mathrm T}\bar V_{k+1}^{-\frac12}m_{k} \\
&+ 2\bigl(\alpha^2 + \beta_1^2\bigr) \beta_1\delta_1\psi_k
   g_k^{\mathrm T}\bar V_{k+1}^{-\frac12}m_{k}
    + 2a g_k^{\mathrm T}\tilde\theta_k  w_{k+1}\\
&+2(\alpha^2+\beta_1^2)\delta_1^2 \psi_k
g_k^{\mathrm T}\bar V_{k+1}^{-\frac12}g_kw_{k+1} \\
&+2(\alpha^2+\beta_1^2)\beta_1\delta_1 
m_{k}^{\mathrm T}\bar V_{k+1}^{-\frac12}g_k w_{k+1}\\
& + 2\tilde\theta_k^{\mathrm T}\bar V_{k+1}^{\frac12}\Delta_{k+1}
+2a\psi_k g_k^{\mathrm T}\Delta_{k+1}
+2\alpha\beta_1m_{k}^{\mathrm T}\Delta_{k+1}\\
& + 2(\alpha^2+\beta_1^2)\delta_1^2g_k^{\mathrm{T}}\bar{V}_{k+1}^{-\frac{1}{2}}g_kw_{k+1}^2 +2 \Delta_{k+1}^{\mathrm{T}}\bar{V}_{k+1}^{\frac{1}{2}}\Delta_{k+1}.
    \end{aligned}
\end{equation}

We analyze each term on the RHS of \eqref{important lyapunov} separately, so as to establish a recursive Lyapunov inequality without requiring any data excitation condition. The overall argument parallels that of the proof of Theorem~\ref{Thm1}.

Since $\alpha^2+\beta_1^2<\sqrt{\beta_2}$, we can choose a sufficiently large constant
$r>0$ such that
\[
\frac{1}{r}+\frac{(r+2)\bigl(\alpha^2+\beta_1^2\bigr)}{r\sqrt{\beta_2}}\le 1.
\]
 Then we have
\begin{equation}\label{lyapunov2}
    \begin{aligned}      
    &\|\tilde{\Theta}_{k+1}\|^2 \\
    \leq& \|\tilde{\Theta}_{k}\|^2 -2a\kappa \mathcal{L}_k^{\frac{2}{p}}+ \sqrt{\delta_2}\tilde{\theta}_k^{\mathrm{T}}(\Phi_{k+1}^{\frac{1}{2}}+c_0I) \tilde{\theta}_k\\ &
    +  r \alpha^2 \tilde{\theta}_k^{\mathrm{T}}\bar{V}_{k}^{\frac{1}{2}}\tilde{\theta}_k 
       + (r+2)\bigl(\alpha^2+\beta_1^2\bigr)\delta_1^2\psi_k^2
   g_k^{\mathrm T}\bar V_{k+1}^{-\frac12}g_k \\
   & + 2a g_k^{\mathrm T}\tilde\theta_k w_{k+1} +(r+3)(\alpha^2+\beta_1^2)\delta_1^2g_k^{\mathrm{T}}\bar{V}_{k+1}^{-\frac{1}{2}}g_kw_{k+1}^2 \\
   &+ 2\tilde\theta_k^{\mathrm T}\bar V_{k+1}^{\frac12}\Delta_{k+1}+2a\psi_k g_k^{\mathrm T}\Delta_{k+1}
+2\alpha\beta_1m_{k}^{\mathrm T}\Delta_{k+1} \\
&+2 \Delta_{k+1}^{\mathrm{T}}\bar{V}_{k+1}^{\frac{1}{2}}\Delta_{k+1}, ~\text{a.s.},
    \end{aligned}
\end{equation}
where we have used the fact that $\Bigl[\frac{1}{r}+\frac{(r+2)\bigl(\alpha^2+\beta_1^2\bigr)}{r\sqrt{\beta_2}}\Bigr]\leq 1$. 

To analyze \eqref{lyapunov2}, we further introduce several inequalities in expectation.

With Assumption \ref{ass:regret-sector}, \eqref{decomposition noise} and \eqref{notations 1}, and noticing that $\{w_{k}, \mathcal{F}_k\}$ is the martingale difference sequence and $\psi_k$ is
$\mathcal{F}_k$-measurable, we have
\begin{equation}\label{89}
\begin{aligned}
    \mathbb{E}\|\Phi_{k+1}^{\frac{1}{2}}\|  \leq 
     M_g\sqrt{M_{\psi}^2+C_w^2}.
\end{aligned}
\end{equation} 
Then, combining $\|\tilde{\theta}_k\|\le 2L$ under Assumption~\ref{ass:boundedness}, we have
\begin{equation}
    \mathbb{E} \tilde{\theta}_k^{\mathrm{T}}(\Phi_{k+1}^{\frac{1}{2}}+c_0I) \tilde{\theta}_k  \leq 4L^2\Bigl(M_g\sqrt{M_\psi^2+C_w^2}+c_0\Bigr),
\end{equation}
where we used that $\Phi_k$ is diagonal.

Similarly,  we have
\begin{equation}\label{cross 0}
\mathbb{E}g_k^{\mathrm T}\tilde\theta_k\, w_{k+1}
=
\mathbb{E}\bigl[g_k^{\mathrm T}\tilde\theta_k\,\mathbb{E}\!\left[w_{k+1}\mid\mathcal{F}_k\right]\bigr]
=0.
\end{equation}

Since $V_k=\beta_2 V_{k-1}+\delta_2\Phi_k$ with $V_{0}=0$, it follows from \eqref{notations 1} that, similarly to \eqref{89}, we have
\begin{equation}\label{eq:Vk_bound}
\begin{aligned}
\mathbb E\|V_k\|
\leq  M_g^2(M_\psi^2+C_w^2).
\end{aligned}
\end{equation}
Consequently, since $\bar V_k^{\frac{1}{2}}=V_k^{\frac{1}{2}}+c_0 I$ is diagonal, we have
\begin{equation}\label{eq:Vkhalf_bound}
\mathbb E\|\bar V_k^{\frac{1}{2}}\|
\le \mathbb E\|V_k^{\frac{1}{2}}\|+c_0
\le M_g\sqrt{M_\psi^2+C_w^2}+c_0.
\end{equation}
Since $\|\bar V_k\|
\leq \bigl(\|V_k\|^{\frac{1}{2}}+c_0\bigr)^2 $,
taking expectations and using Jensen's inequality yield
\begin{equation}\label{eq:EbarV_norm}
\mathbb E\|\bar V_k\|
\le \Bigl((\mathbb E\|V_k\|)^{\frac{1}{2}}+c_0\Bigr)^2\le \bigl(M_g\sqrt{M_\psi^2+C_w^2}+c_0\bigr)^2.
\end{equation}
With \eqref{eq:EbarV_norm} and $\|\tilde{\theta}_k\|\le 2L$, we have
\begin{equation}
    \mathbb{E}\tilde{\theta}_k^{\mathrm{T}}\bar{V}_{k}^{\frac{1}{2}} \tilde{\theta}_k \leq  \mathbb{E} \|\bar{V}_{k}^{\frac{1}{2}}\|\|\tilde{\theta}_k\|^2 \le 4L^2\bigl(M_g\sqrt{M_\psi^2+C_w^2}+c_0\bigr).
\end{equation}

Moreover, under Assumption \ref{ass:regret-sector}, and with \eqref{bar_Vk}, $g_i^{\mathrm T}\bar V_{i+1}^{-\frac12}g_i \leq \frac{M_g^2}{c_0}$ a.s.,
\begin{equation}
    \mathbb{E}\psi_k^2
   g_k^{\mathrm T}\bar V_{k+1}^{-\frac12}g_k \leq \frac{M_{\psi}^2M_g^2}{c_0}.
\end{equation}
\begin{equation}
    \mathbb{E}
   g_k^{\mathrm T}\bar V_{k+1}^{-\frac12}g_k w_{k+1}^2 \leq \frac{M_g^2}{c_0} \mathbb{E} w_{k+1}^2 \leq \frac{M_g^2C_w^2}{c_0}.
\end{equation}

With \eqref{eq:EbarV_norm}, using $\|\tilde{\theta}_k\|\le 2L$, $\mathbb{E}\|\Delta_{k+1}\|^2\le C_\Delta^2$, and noting that $\bar V_k^{\frac{1}{2}}$ is diagonal and positive definite, we have
\begin{equation}
\begin{aligned}
    \mathbb{E}\tilde{\theta}_k^{\mathrm{T}}\bar{V}_{k+1}^{\frac{1}{2}}\Delta_{k+1} 
    \leq 2LC_{\Delta}\bigl(M_g\sqrt{M_\psi^2+C_w^2}+c_0\bigr).
\end{aligned}
\end{equation}

Similarly, we have
\begin{equation}
    \mathbb{E} \psi_k g_k^{\mathrm T}\Delta_{k+1} \leq M_{\psi}M_gC_{\Delta}.
\end{equation}

Similar to the analysis of \eqref{eq:Vk_bound}, we have
\begin{equation}
\begin{aligned}
\mathbb E\|m_k\|^2
\le M_g^2(M_{\psi}^2 + C_w^2),
\end{aligned}
\end{equation}
where we use the weighted Cauchy--Schwarz inequality.

Then we have
\begin{equation}
\begin{aligned}
    \mathbb{E}m_{k}^{\mathrm{T}}\Delta_{k+1}  \leq& \sqrt{\mathbb{E}\|m_{k}\|^2\mathbb{E}\|\Delta_{k+1}\|^2}\\
    \leq& M_gC_{\Delta}\sqrt{M_{\psi}^2 + C_w^2}.
\end{aligned}
\end{equation}

Moreover, 
\begin{equation}\label{last Theta}
\begin{aligned}
    \mathbb{E}\Delta_{k+1}^{\mathrm{T}}\bar{V}_{k+1}^{\frac{1}{2}}\Delta_{k+1}   \leq &(E\|\bar{V}_{k+1}\|)^{\frac{1}{2}}(E\|\Delta_{k+1}\|^4)^{\frac{1}{2}}\\
    \leq & 2LC_{\Delta}\bigl(M_g\sqrt{M_\psi^2+C_w^2}+c_0\bigr),
\end{aligned}
\end{equation}
where we used the fact that $\|\Delta_{k+1}\|^4 \leq 4L^2\|\Delta_{k+1}\|^2$.

Combining \eqref{lyapunov2}--\eqref{last Theta}, taking expectations on both sides of \eqref{lyapunov2} and summing the resulting inequality from $k=1$ to $n$, we have
\begin{equation}\label{leijia}
    \begin{aligned}
        \mathbb{E}\|\tilde{\Theta}_{n+1}\|^2 \leq &\mathbb{E}\|\tilde{\Theta}_1\|^2 -\sum_{i=1}^{n}2a\kappa\mathbb{E}\mathcal{L}_i^{\frac{2}{p}}\\
        &+ 4nL^2\mathcal{M}_0\Bigl(\sqrt{\delta_2}+ r\alpha^2\Bigr) \\
&+ n(\alpha^2+\beta_1^2)\delta_1^2\,
\frac{M_g^2}{c_0}\Bigl[(r+2)M_\psi^2+(r+3)C_w^2\Bigr] \\
&+ 2n(\alpha\beta_1+4L)\mathcal{M}_0C_\Delta
+ 2na M_\psi M_g C_\Delta,
    \end{aligned}
\end{equation}
where $ \mathcal{M}_0 =M_g\sqrt{M_\psi^2+C_w^2}+c_0 .$

Moreover, one can deduce that
\begin{equation}\label{inequality}
    \sum_{i=1}^{n}\mathbb{E}\mathcal{L}_i \leq n^{1-\frac{p}{2}} \mathbb{E}\Bigl(\sum_{i=1}^{n}\mathcal{L}_i^{\frac{2}{p}}\Bigr)^{\frac{p}{2}} \leq n^{1-\frac{p}{2}}\Bigl(\mathbb{E}\sum_{i=1}^{n}\mathcal{L}_i^{\frac{2}{p}}\Bigr)^{\frac{p}{2}},
\end{equation}
where $p \in [1,2]$ is defined in Assumption \ref{ass:regret-sector}.

Combining \eqref{leijia} and \eqref{inequality}, using $(x_1+x_2)^{\frac p2}\le x_1^{\frac p2}+x_2^{\frac p2}$ for $x_1,x_2\ge0$ and $p\in[1,2]$, and noting that $\mathbb{E}\|\tilde{\Theta}_1\|^2$ is uniformly bounded, we obtain \eqref{regret}.
$\hfill\blacksquare$

\section{Experiments}\label{simulation}
We illustrate the theoretical findings through experiments on both synthetic and real-world datasets.

\noindent
\textbf{Numerical Experiments.}  

We consider an online linear regression problem with drifting parameters and nonstationary regressors. The data-generating process is given by
\begin{equation}\label{eq:jump-system}
\left\{
\begin{aligned}
&\phi_{k+1} = A_k \phi_k + v_k,\\[1mm]
&\theta_k = \Pi_{[-5,5]^5}(\theta_{k-1} + \eta_k),\\[1mm]
&y_k = \phi_k^{\mathrm{T}} \theta_k + \varepsilon_k ,
\end{aligned}
\right.
\end{equation}
where $\Pi_{[-5,5]^5}(\cdot)$ denotes the projection onto the hypercube $[-5,5]^5$, and
$\theta_0 \sim \mathrm{N}(0, I_5), 
~
\eta_k \overset{\mathrm{i.i.d.}}{\sim} \mathrm{N}(0, 0.01^2 I_5),
$
where $\mathrm{N}(\cdot,\cdot)$ denotes the Gaussian distribution. This model introduces gradual temporal drift in the parameters.
The regressor $\{\phi_k\}$ is generated by a time-varying  autoregressive process with diagonal matrix $A_k = \mathrm{diag} (0.68 + 0.3 \sin\left(\frac{2\pi k}{50000}\right),0.14,0.11,0.85,0.62)$,  $v_k \overset{\mathrm{i.i.d.}}{\sim} \mathcal{N}\left(0,0.25 I_5\right)$.  The observation noise 
$
\varepsilon_k \sim \mathcal{N}(0, 0.36).
$
It can be verified that the regressor sequence $\{\phi_k\}$ is nonstationary but satisfies the proposed data excitation condition~\eqref{eq:pe} with $h=1$ and $\gamma=0.25$.
The presence of drifting parameters together with correlated nonstationary regressors results in a challenging online learning setting.

\begin{figure}[htbp]
\begin{center}
\includegraphics[width=1\linewidth]{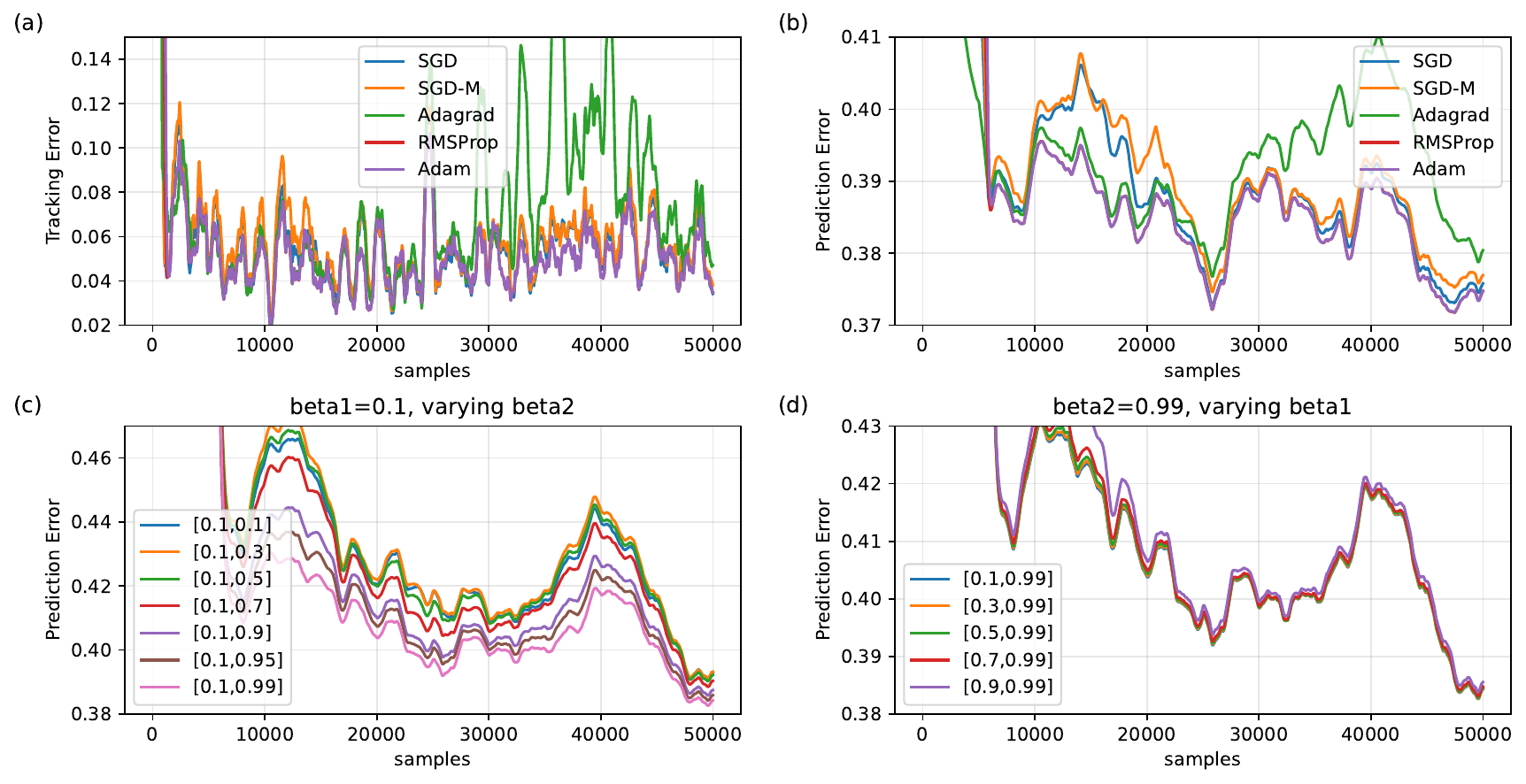}
\caption{Parameter tracking error and prediction error under different online optimization algorithms, and sensitivity of Adam to hyperparameters $(\beta_1,\beta_2)$.}
\label{fig:dtyf}
\end{center}
\end{figure}

We compare five standard online optimization methods (see, e.g., \cite{goodfellow2016dl}): \emph{SGD} with stepsize $\alpha=10^{-2}$, \emph{SGD with Momentum} ($\alpha=10^{-2}, \beta=0.9$), \emph{AdaGrad} ($\alpha=0.9, c_0=10^{-8}$), \emph{RMSProp} ($\alpha=10^{-2}, \beta=0.99, c_0 = 10^{-8}$), and \emph{Adam} ($\alpha=10^{-2}, \beta_1=0.1, \beta_2=0.99, c_0=10^{-8}$). For Adam, the parameter update is followed by a projection onto the admissible set $D=[-10,10]^5$, which guarantees bounded estimates. All methods perform updates based on the instantaneous squared loss, i.e., $(y_{k+1}-\hat{y}_{k+1})^2$, without mini-batches or offline training. Performance is evaluated using two metrics:  
(i) the parameter tracking error $\|\tilde{\theta}_k\|^2$, and  
(ii) a rolling-window average of the squared prediction loss over the most recent $W=5000$ samples. As shown in Fig.~\ref{fig:dtyf}(a), Adam achieves the lowest parameter tracking error throughout the time horizon, indicating effective tracking of the drifting parameters.
Figure~\ref{fig:dtyf}(b) shows that Adam also achieves the lowest rolling mean prediction error compared to the other methods. 
In contrast, AdaGrad exhibits larger fluctuations and higher regret, indicating its sensitivity to nonstationarity. It should be noted that the hyperparameter choice used for Adam, namely the step size $\alpha=\sqrt{C_\Delta}=10^{-2}$ together with a small $\beta_1$ and a large $\beta_2$, is consistent with the intuitive guidelines in Remark~\ref{regret_intuitive} and aligns with our theoretical results.

We further investigate the sensitivity of Adam to its hyperparameters $(\beta_1,\beta_2)$ from a prediction-error perspective.
Fixing $\beta_1=0.1$ and varying $\beta_2 \in {(0.1,0.3,0.5,0.7,0.9,0.95,0.99)}$, Fig.~\ref{fig:dtyf}(c) exhibits a clear performance stratification: larger $\beta_2$ consistently leads to lower prediction error and noticeably smoother trajectories.
In contrast, when $\beta_2$ is fixed at $0.99$, varying $\beta_1$ over ${(0.1,0.3,0.5,0.7,0.9)}$ yields curves that are largely close to one another in Fig.~\ref{fig:dtyf}(d), suggesting a weaker dependence on $\beta_1$ over a broad range. However, taking $\beta_1$ large (e.g., $\beta_1=0.9$) degrades prediction accuracy in time-varying and nonstationary regimes. Overall, these empirical observations align with our theoretical results in time-varying scenarios with nonstationary data and offer practical guidance for hyperparameter selection: choosing a sufficiently large $\beta_2$ together with a comparatively smaller $\beta_1$ yields lower prediction error.

\noindent
\textbf{Real-World Data Experiments.}

Having verified the performance of Adam under a synthetic drifting-parameter model, we next evaluate the practical impact of Adam’s momentum parameters $(\beta_1,\beta_2)$ in a real-world nonstationary time-series prediction task. Experiments are conducted on the UCI Air Quality dataset~\cite{DeVito2008AirQuality}, which exhibits pronounced temporal variability, where the goal is one-step-ahead prediction of the carbon monoxide (CO) concentration using an online linear model.
At each time step, the input feature vector includes a bias term, the past $12$ lagged values of the target variable, sinusoidal hour-of-day features capturing daily periodicity, and contemporaneous measurements from other sensors.
All features are standardized prior to training.

Model parameters are updated online via the Adam algorithm with stepsize $\alpha=10^{-3}$ and $c_0=10^{-8}$.
We perform a grid search over
$\beta_1 \in \{0, 0.02, 0.05, 0.1, 0.2, 0.3, 0.5, 0.7\}$ and
$\beta_2 \in \{0.9, 0.95, 0.99, 0.995, 0.999, 0.9995\}$.
After each update, the parameter vector $\theta_k \in \mathbb{R}^d$ is projected onto a bounded admissible set $D=[-10,10]^d$.
Performance is evaluated using the squared prediction loss $(y_{k+1}-\hat{y}_{k+1})^2$. The prediction error is defined as the average loss, i.e., $\frac{1}{n}\sum_{k=1}^n (y_{k+1}-\hat{y}_{k+1})^2$, without mini-batching or offline recalibration.
Figure~\ref{fig:beta_heatmap} shows that prediction performance exhibits a clear dependence on the second-moment parameter $\beta_2$. The left heat map shows a clear monotone dependence on $\beta_2$ across all $\beta_1$. Larger $\beta_2$ consistently yields lower average prediction error, and $\beta_2 \ge 0.999$ performs best. By comparison, sensitivity to $\beta_1$ is limited over a moderate range, but degrades for large $\beta_1$.

\begin{figure}[htbp]
\begin{center}
\includegraphics[width=1\linewidth]{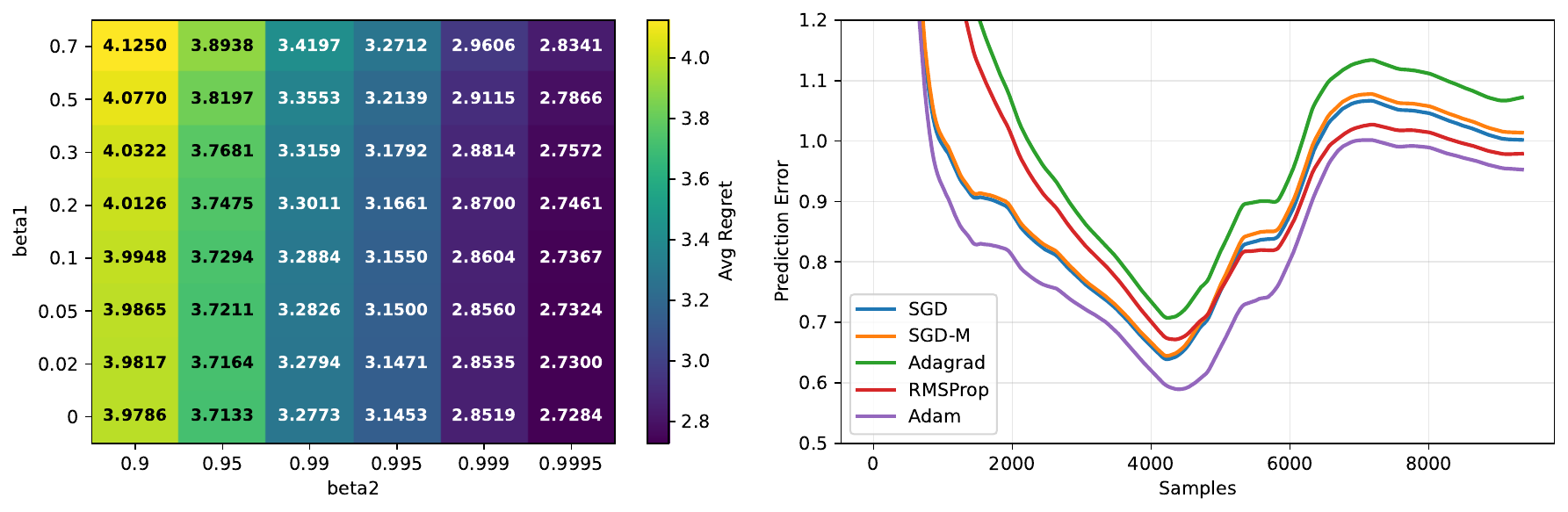}
\caption{Left: heatmap of average prediction error as a function of $(\beta_1,\beta_2)$. 
Right: average prediction error trajectories under different online optimization algorithms.}
\label{fig:beta_heatmap}
\end{center}
\end{figure}

The right panel compares Adam with four standard online optimization methods using fixed parameter configurations. Specifically, we consider: \emph{SGD} with stepsize $\alpha = 5 \times 10^{-3}$; \emph{SGD with momentum} $(\alpha = 5 \times 10^{-3}, \beta = 0.1)$; \emph{AdaGrad} $(\alpha = 0.5, c_0 = 10^{-8})$; \emph{RMSProp} $(\alpha = 10^{-2}, \beta = 0.9995, c_0 = 10^{-8})$; \emph{Adam} $(\alpha = 2 \times 10^{-2}, \beta_1 = 0.01, \beta_2 = 0.9995, c_0 = 10^{-8})$. For Adam, each parameter update is followed by a projection onto the admissible set $D=[-10,10]^d$, where $d$ denotes the feature dimension, ensuring bounded parameter estimates during the online learning process. While previous studies have demonstrated the advantages of Adam in the stationary settings, our experiment focuses on its performance under online nonstationary data streams. The results show that Adam produces significantly smoother and more stable prediction-error trajectories in this dynamic environment, highlighting its adaptation to distributional changes and time-varying data characteristics with suitable hyperparameters.

\section{Conclusion}\label{conclusion}

Despite the well-known Adam’s widespread practical success, providing rigorous guarantees for both parameter estimation and adaptive prediction remain challenging, particularly when data are generated by time-varying, nonstationary dynamical systems. By combining Lyapunov-based techniques with new tools for analyzing products of random matrices in Adam’s dynamics, we established explicit parameter-tracking and prediction guarantees for a broad class of nonlinear stochastic systems.
Our results rely on a conditional excitation condition on the data, which substantially relaxes the commonly imposed i.i.d. assumptions and therefore applies to feedback-driven and temporally dependent data streams. Moreover, compared with existing related  results, our bounds also feature a milder and better-conditioned dependence on the hyperparameters, yield sharper scalings with respect to momentum and second-moment adaptation, and explicitly quantify the combined impact of gradient noise and parameter drift. In this way, we provide theoretical guarantees that extend well beyond i.i.d. and constant parameter learning  settings.
These findings offer a theoretical foundation of Adam-type algorithm under general datasets and also suggest guidelines for hyperparameter selection, which are further supported by two experiments. Interesting directions for future work include extending the analysis to broader nonlinear model classes and developing systematic hyperparameter design principles.

\bibliographystyle{IEEEtran}
\bibliography{references}

\end{document}